\newtheorem{theorem}{Theorem}
\newtheorem{lemma}{Lemma}
\newif\if@restonecol
\tikzset{set/.style={draw,circle,inner sep=0pt,align=center}}
  \tikzstyle{abstractbox} = [draw=black, fill=white, rectangle,
\tikzstyle{abstracttitle} =[fill=white]
\DeclareMathOperator*{\argmax}{arg\,max}
\tikzstyle{cblue}=[circle, draw, thin,fill=cyan!20, scale=0.8]
\tikzstyle{qgre}=[rectangle, draw, thin,fill=green!20, scale=0.8]
\tikzstyle{rpath}=[ultra thick, red, opacity=0.4]
\tikzstyle{legend_isps}=[rectangle, rounded corners, thin,
\tikzstyle{legend_overlay}=[rectangle, rounded corners, thin,
\tikzstyle{legend_phytop}=[rectangle, rounded corners, thin,
\tikzstyle{legend_general}=[rectangle, rounded corners, thin,
\colorlet{myRed}{red!20}
\tikzset{
  rows/.style 2 args={/utils/temp/.style={row ##1/.append style={nodes={#2}}},
    /utils/temp/.list={#1}},
  columns/.style 2 args={/utils/temp/.style={column ##1/.append style={nodes={#2}}},
    /utils/temp/.list={#1}}}
\definecolor{switch}{HTML}{006996}
    \pgfmathsetlength\pgfutil@tempdima{\pgfkeysvalueof{/pgf/parallelepiped
      offset x}}
    \pgfmathsetlength\pgfutil@tempdimb{\pgfkeysvalueof{/pgf/parallelepiped
      offset y}}
    \def\ppd@offset{\pgfpoint{\pgfutil@tempdima}{\pgfutil@tempdimb}}
\tikzset{anchor/.append code=\let\tikz@auto@anchor\relax,
  add font/.code=%
    \expandafter\def\expandafter\tikz@textfont\expandafter{\tikz@textfont#1},
  left delimiter/.style 2 args={append after command={\tikz@delimiter{south east}
    {south west}{every delimiter,every left delimiter,#2}{south}{north}{#1}{.}{\pgf@y}}}}
\tikzstyle{sms} = [rectangle callout, draw,very thick, rounded corners, minimum height=20pt]
\tikzset{anchor/.append code=\let\tikz@auto@anchor\relax,
  add font/.code=%
    \expandafter\def\expandafter\tikz@textfont\expandafter{\tikz@textfont#1},
  left delimiter/.style 2 args={append after command={\tikz@delimiter{south east}
    {south west}{every delimiter,every left delimiter,#2}{south}{north}{#1}{.}{\pgf@y}}}}
\tikzstyle{sms} = [rectangle callout, draw,very thick, rounded corners, minimum height=20pt]
\tikzset{l3 switch/.style={
    parallelepiped,fill=switch, draw=white,
    minimum width=0.75cm,
    minimum height=0.75cm,
    parallelepiped offset x=1.75mm,
    parallelepiped offset y=1.25mm,
    path picture={
      \node[fill=white,
        circle,
        minimum size=6pt,
        inner sep=0pt,
        append after command={
          \pgfextra{
            \foreach \angle in {0,45,...,360}
            \draw[-latex,fill=white] (\tikzlastnode.\angle)--++(\angle:2.25mm);
          }
        }
      ]
       at ([xshift=-0.75mm,yshift=-0.5mm]path picture bounding box.center){};
    }
  },
  ports/.style={
    line width=0.3pt,
    top color=gray!20,
    bottom color=gray!80
  },
  rack switch/.style={
    parallelepiped,fill=white, draw,
    minimum width=1.25cm,
    minimum height=0.25cm,
    parallelepiped offset x=2mm,
    parallelepiped offset y=1.25mm,
    xscale=-1,
    path picture={
      \draw[top color=gray!5,bottom color=gray!40]
      (path picture bounding box.south west) rectangle
      (path picture bounding box.north east);
      \coordinate (A-west) at ([xshift=-0.2cm]path picture bounding box.west);
      \coordinate (A-center) at ($(path picture bounding box.center)!0!(path
        picture bounding box.south)$);
      \foreach \x in {0.275,0.525,0.775}{
        \draw[ports]([yshift=-0.05cm]$(A-west)!\x!(A-center)$)
          rectangle +(0.1,0.05);
        \draw[ports]([yshift=-0.125cm]$(A-west)!\x!(A-center)$)
          rectangle +(0.1,0.05);
       }
      \coordinate (A-east) at (path picture bounding box.east);
      \foreach \x in {0.085,0.21,0.335,0.455,0.635,0.755,0.875,1}{
        \draw[ports]([yshift=-0.1125cm]$(A-east)!\x!(A-center)$)
          rectangle +(0.05,0.1);
      }
    }
  },
  server/.style={
    parallelepiped,
    fill=white, draw,
    minimum width=0.35cm,
    minimum height=0.75cm,
    parallelepiped offset x=3mm,
    parallelepiped offset y=2mm,
    xscale=-1,
    path picture={
      \draw[top color=gray!5,bottom color=gray!40]
      (path picture bounding box.south west) rectangle
      (path picture bounding box.north east);
      \coordinate (A-center) at ($(path picture bounding box.center)!0!(path
        picture bounding box.south)$);
      \coordinate (A-west) at ([xshift=-0.575cm]path picture bounding box.west);
      \draw[ports]([yshift=0.1cm]$(A-west)!0!(A-center)$)
        rectangle +(0.2,0.065);
      \draw[ports]([yshift=0.01cm]$(A-west)!0.085!(A-center)$)
        rectangle +(0.15,0.05);
      \fill[black]([yshift=-0.35cm]$(A-west)!-0.1!(A-center)$)
        rectangle +(0.235,0.0175);
      \fill[black]([yshift=-0.385cm]$(A-west)!-0.1!(A-center)$)
        rectangle +(0.235,0.0175);
      \fill[black]([yshift=-0.42cm]$(A-west)!-0.1!(A-center)$)
        rectangle +(0.235,0.0175);
    }
  },
}
\tikzset{%
  interface/.style={draw, rectangle, rounded corners, font=\LARGE\sffamily},
  ethernet/.style={interface, fill=yellow!50},
  serial/.style={interface, fill=green!70},
  speed/.style={sloped, anchor=south, font=\large\sffamily},
  route/.style={draw, shape=single arrow, single arrow head extend=4mm,
    minimum height=1.7cm, minimum width=3mm, white, fill=switch!20,
    drop shadow={opacity=.8, fill=switch}, font=\tiny}
}
\newcommand*{\shift}{1.3cm}
\newcommand{\Crossk}{$\mathbin{\tikz [x=1.2ex,y=1.2ex,line width=.1ex, black] \draw (0,0) -- (1,1) (0,1) -- (1,0);}$}%
\newcommand*{\router}[1]{
\begin{tikzpicture}
  \coordinate (ll) at (-3,0.5);
  \coordinate (lr) at (3,0.5);
  \coordinate (ul) at (-3,2);
  \coordinate (ur) at (3,2);
  \shade [shading angle=90, left color=switch, right color=white] (ll)
    arc (-180:-60:3cm and .75cm) -- +(0,1.5) arc (-60:-180:3cm and .75cm)
    -- cycle;
  \shade [shading angle=270, right color=switch, left color=white!50] (lr)
    arc (0:-60:3cm and .75cm) -- +(0,1.5) arc (-60:0:3cm and .75cm) -- cycle;
  \draw [thick] (ll) arc (-180:0:3cm and .75cm)
    -- (ur) arc (0:-180:3cm and .75cm) -- cycle;
  \draw [thick, shade, upper left=switch, lower left=switch,
    upper right=switch, lower right=white] (ul)
    arc (-180:180:3cm and .75cm);
  \node at (0,0.5){\color{blue!60!black}\Huge #1};
  \begin{scope}[yshift=2cm, yscale=0.28, transform shape]
    \node[route, rotate=45, xshift=\shift] {\strut};
    \node[route, rotate=-45, xshift=-\shift] {\strut};
    \node[route, rotate=-135, xshift=\shift] {\strut};
    \node[route, rotate=135, xshift=-\shift] {\strut};
  \end{scope}
\end{tikzpicture}}
  \def\tikz@shading{cloud}\tikz@addmode{\tikz@mode@shadetrue}}
\tikzset{my cloud/.style={
     cloud, draw, aspect=2,
     cloud color={gray!5!white}
  }
}
\begin{document}
\bstctlcite{MyBSTcontrol}
\title{
Learning Intrusion Prevention Policies\\through Optimal Stopping
}
\author{\IEEEauthorblockN{Kim Hammar \IEEEauthorrefmark{2}\IEEEauthorrefmark{3} and Rolf Stadler\IEEEauthorrefmark{2}\IEEEauthorrefmark{3}}

 \IEEEauthorblockA{\IEEEauthorrefmark{2}
Division of Network and Systems Engineering, KTH Royal Institute of Technology, Sweden
 }
 \IEEEauthorblockA{\IEEEauthorrefmark{3} KTH Center for Cyber Defense and Information Security, Sweden \\
  \newline
Email: \{kimham, stadler\}@kth.se
\\
\today
}
}
\date{June 11, 2021}
\IEEEoverridecommandlockouts
\IEEEpubid{\makebox[\columnwidth]{978-3-903176-31-7 \copyright2021 IFIP \hfill} \hspace{\columnsep}\makebox[\columnwidth]{ }}
\maketitle
\thispagestyle{plain}
\pagestyle{plain}
\IEEEpubidadjcol
\begin{abstract}
We study automated intrusion prevention using reinforcement learning. In a novel approach, we formulate the problem of intrusion prevention as an optimal stopping problem. This formulation allows us insight into the structure of the optimal policies, which turn out to be threshold based. Since the computation of the optimal defender policy using dynamic programming is not feasible for practical cases, we approximate the optimal policy through reinforcement learning in a simulation environment. To define the dynamics of the simulation, we emulate the target infrastructure and collect measurements. Our evaluations show that the learned policies are close to optimal and that they indeed can be expressed using thresholds.
\end{abstract}
\begin{IEEEkeywords}
Network Security, automation, optimal stopping, reinforcement learning, Markov Decision Processes
\end{IEEEkeywords}
\section{Introduction}
An organization's security strategy has traditionally been defined, implemented, and updated by domain experts \cite{int_prevention}. Although this approach can provide basic security for an organization's communication and computing infrastructure, a growing concern is that infrastructure update cycles become shorter and attacks increase in sophistication. Consequently, the security requirements become increasingly difficult to meet. As a response, significant efforts are made to automate security processes and functions.
Over the last years, research directions emerged to automatically find and update security policies. One such direction aims at automating the creation of threat models for a given infrastructure \cite{mal_pontus}. A second direction focuses on evolutionary processes that produce novel exploits and corresponding defenses \cite{armsrace_malware}. In a third direction, the interaction between an attacker and a defender is modeled as a game, which allows attack and defense policies to be analyzed and sometimes constructed using game theory \cite{nework_security_alpcan, serkan_gyorgy_game}. In a fourth direction, statistical tests are used to detect attacks \cite{tartakovsky_1}. Further, the evolution of an infrastructure and the actions of a defender is studied using the framework of dynamical systems. This framework allows optimal policies to be obtained using methods from control theory \cite{malware_oc} or dynamic programming \cite{dp_security_1,miehling_attack_graph}. In all of the above directions, machine learning techniques are often applied to estimate model parameters and policies \cite{hammar_stadler, elderman}.

Many activities center around modeling the infrastructure as a discrete-time dynamical system in the form of a Markov Decision Process (MDP). Here, the possible actions of the defender are defined by the action space of the MDP, the defender policy is determined by the actions that the defender takes in different states, and the security objective is encoded in the reward function, which the defender tries to optimize.

To find the optimal policy in an MDP, two main methods are used: dynamic programming and reinforcement learning. The advantage of dynamic programming is that it has a strong theoretical grounding and oftentimes allows to derive properties of the optimal policy \cite{bert05,krishnamurthy_2016}. The disadvantage is that it requires complete knowledge of the MDP, including the transition probabilities. In addition, the computational overhead is high, which makes it infeasible to compute the optimal policy for all but simple configurations \cite{BertsekasTsitsiklis96,krishnamurthy_2016,dp_security_1}. Alternatively, reinforcement learning enables \textit{learning} the dynamics of the model through exploration. With the reinforcement learning approach, it is often possible to compute close approximations of the optimal policy for non-trivial configurations \cite{hammar_stadler, deep_rl_cyber_sec, BertsekasTsitsiklis96, rl_bible}. As a drawback, however, theoretical insights into the structure of the optimal policy generally remain elusive.

In this paper, we study an intrusion prevention use case that involves the IT infrastructure of an organization. The operator of this infrastructure, which we call the defender, takes measures to protect it against a possible attacker while, at the same time, providing a service to a client population. The infrastructure includes a public gateway through which the clients access the service and which also is open to a plausible attacker. The attacker decides when to start an intrusion and then executes a sequence of actions that includes reconnaissance and exploits. Conversely, the defender aims at preventing intrusions and maintaining the service to its clients. It monitors the infrastructure and can block outside access to the gateway, an action that disrupts the service but stops any ongoing intrusion. What makes the task of the defender difficult is the fact that it lacks direct knowledge of the attacker's actions and must infer that an intrusion occurs from monitoring data.

We study the use case within the framework of discrete-time dynamical systems. Specifically, we formulate the problem of finding an optimal defender policy as an \textit{optimal stopping problem}, where stopping refers to blocking access to the gateway. Optimal stopping is frequently used to model problems in the fields of finance and communication systems \cite{optimal_stopping_finance,roy_threshold,tartakovsky_1,kurt_rl}. To the best of our knowledge, finding an intrusion prevention policy through solving an optimal stopping problem is a novel approach.

By formulating intrusion prevention as an optimal stopping problem, we know from the theory of dynamic programming that the optimal policy can be expressed through a threshold that is obtained from observations, i.e. from infrastructure measurements \cite{krishnamurthy_2016,bert05}. This contrasts with prior works that formulate the problem using a general MDP, which does not allow insight into the structure of optimal policies \cite{hammar_stadler,elderman,oslo_uni,schwartz_2020}.

To account for the fact that the defender only has access to a limited number of measurements and cannot directly observe the attacker, we model the optimal stopping problem with a Partially Observed Markov Decision Process (POMDP). We obtain the defender policies by simulating a series of POMDP episodes in which an intrusion takes place and where the defender continuously updates its policy based on outcomes of previous episodes. To update the policy, we use a state-of-the-art reinforcement learning algorithm. This approach enables us to find effective defender policies despite the uncertainty about the attacker's behavior and despite the large state space of the model.

We validate our approach to intrusion prevention for a non-trivial infrastructure configuration and two attacker profiles. Through extensive simulation, we demonstrate that the learned defender policies indeed are threshold based, that they converge quickly, and that they are close to optimal.

We make two contributions with this paper. First, we formulate the problem of intrusion prevention as a problem of optimal stopping. This novel approach allows us a) to derive properties of the optimal defender policy using results from dynamic programming and b) to use reinforcement learning techniques to approximate the optimal policy for a non-trivial configuration. Second, we instantiate the simulation model with measurements collected from an emulation of the target infrastructure, which reduces the assumptions needed to construct the simulation model and narrows the gap between a simulation episode and a scenario playing out in a real system. This addresses a limitation of related work that rely on abstract assumptions to construct the simulation model \cite{hammar_stadler,elderman,oslo_uni,schwartz_2020}.
\begin{figure}
  \centering
  \scalebox{0.93}{
    \input{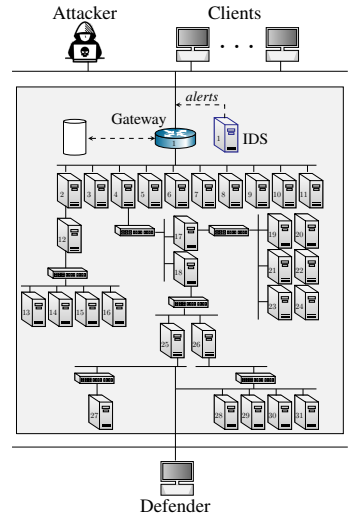}
    }
    \caption{The IT infrastructure and the actors in the use case.}
    \label{fig:system2}
\end{figure}
\section{The Intrusion Prevention Use Case}\label{sec:use_case}
We consider an intrusion prevention use case that involves the IT infrastructure of an organization. The operator of this infrastructure, which we call the defender, takes measures to protect it against an attacker while, at the same time, providing a service to a client population (Fig. \ref{fig:system2}). The infrastructure includes a set of servers that run the service and an intrusion detection system (IDS) that logs events in real-time. Clients access the service through a public gateway, which also is open to the attacker.

We assume that the attacker intrudes into the infrastructure through the gateway, performs reconnaissance, and exploits found vulnerabilities, while the defender continuously monitors the infrastructure through accessing and analyzing IDS statistics and login attempts at the servers. The defender has a single action to stop the attacker, which involves blocking all outside access to the gateway. As a consequence of this action, the service as well as any ongoing intrusion are disrupted.

When deciding whether to block the gateway, the defender must balance two objectives: to maintain the service to its clients and to keep a possible attacker out of the infrastructure. The optimal policy for the defender is to maintain service until the moment when the attacker enters through the gateway, at which time the gateway must be blocked. The challenge for the defender is to identify the precise time when this moment occurs.

In this work, we model the attacker as an agent that starts the intrusion at a random point in time and then takes a predefined sequence of actions, which includes reconnaissance to explore the infrastructure and exploits to compromise the servers.

We study the use case from the defender’s perspective. The evolution of the system state and the actions by the defender are modeled with a discrete-time Partially Observed Markov Decision Process (POMDP). The reward function of this process encodes the benefit of maintaining service and the loss of being intruded. Finding an optimal defender policy thus means maximizing the expected reward. To find an optimal policy, we solve an \textit{optimal stopping problem}, where the stopping action refers to blocking the gateway.
\section{Theoretical Background}\label{sec:theory_background}
This section contains background information on Markov decision processes, reinforcement learning, and optimal stopping.
\subsection{Markov Decision Processes}\label{sec:mdps}
A Markov Decision Process (MDP) models the control of a discrete-time dynamical system and is defined by a seven-tuple $\mathcal{M} = \langle \mathcal{S}, \mathcal{A}, \mathcal{P}^{a_t}_{s_t,s_{t+1}}, \mathcal{R}^{a_t}_{s_t,s_{t+1}}, \gamma, \rho_1, T \rangle$ \cite{bellman1957markovian,puterman}. $\mathcal{S}$ denotes the set of states and $\mathcal{A}$ denotes the set of actions. $\mathcal{P}^{a_t}_{s_t,s_{t+1}}$ refers to the probability of transitioning from state $s_t$ to state $s_{t+1}$ when taking action $a_t$ (Eq. \ref{eq:mdp_prob_1}), which has the Markov property $\mathbb{P}\left[s_{t+1}|s_t\right] = \mathbb{P}\left[s_{t+1}| s_1, \hdots, s_t\right]$. Similarly, $\mathcal{R}^{a_t}_{s_t,s_{t+1}} \in \mathbb{R}$ is the expected reward when taking action $a_t$ and transitioning from state $s_t$ to state $s_{t+1}$ (Eq. \ref{eq:mdp_reward_fun_1}). If $\mathcal{P}^{a_t}_{s_t,s_{t+1}}$ and $\mathcal{R}^{a_t}_{s_t,s_{t+1}}$ are independent of the time-step $t$, the MDP is said to be \textit{stationary}. Finally, $\gamma \in \left(0,1\right]$ is the discount factor, $\rho_1 : \mathcal{S} \rightarrow [0,1]$ is the initial state distribution, and $T$ is the time horizon.
\begin{align}
&\mathcal{P}^{a_t}_{s_t,s_{t+1}} = \mathbb{P}\left[s_{t+1}| s_t, a_t\right]\label{eq:mdp_prob_1}\\
& \mathcal{R}^{a_t}_{s_t,s_{t+1}} = \mathbb{E}\left[r_{t+1}| a_t,  s_t, s_{t+1}\right] \label{eq:mdp_reward_fun_1}
\end{align}
The system evolves in discrete time-steps from $t=1$ to $t=T$, which constitute one \textit{episode} of the system.

A Partially Observed Markov Decision Process (POMDP) is an extension of an MDP \cite{howard_mdps,krishnamurthy_2016}. In contrast to an MDP, in a POMDP the states are not directly observable. A POMDP is defined by a nine-tuple $\mathcal{M}_{\mathcal{P}} = \langle \mathcal{S}, \mathcal{A}, \mathcal{P}^{a_t}_{s_t,s_{t+1}}, \mathcal{R}^{a_t}_{s_t,s_{t+1}},\gamma, \rho_1, T, \mathcal{O}, \mathcal{Z}\rangle$. The first seven elements define an MDP. $\mathcal{O}$ denotes the set of observations and $\mathcal{Z}(o_{t+1},s_{t+1},a_t) = \mathbb{P}[o_{t+1}|s_{t+1},a_t]$ is the observation function, where $o_{t+1} \in \mathcal{O}$, $s_{t+1} \in \mathcal{S}$, and $a_t \in \mathcal{A}$.

The belief state $b_t \in \mathcal{B}$ is defined as $b_t(s)=\mathbb{P}[s_t=s|h_t]$ for all $s \in \mathcal{S}$. The belief space $\mathcal{B}=\Delta(\mathcal{S})$ is the unit $(|\mathcal{S}|-1)$-simplex \cite{pomdp_belief_optimal,ASTROM1965174}, where $\Delta(\mathcal{S})$ denotes the set of probability distributions over $\mathcal{S}$. $b_t$ is a sufficient statistic of the state $s_t$ based on the history $h_t$ of the initial state distribution, the actions, and the observations: $h_t=(\rho_1,a_1,o_1,\hdots,a_{t-1},o_t) \in \mathcal{H}$. By defining the state at time $t$ to be the belief state $b_t$, a POMDP can be formulated as a continuous-state MDP: $\mathcal{M} = \langle \mathcal{B}, \mathcal{A}, \mathcal{P}^{a_t}_{b_t,b_{t+1}}, \mathcal{R}_{b_t,b_{t+1}}^{a_t}, \gamma, \rho_1, T \rangle$.

The belief state can be computed recursively as follows \cite{krishnamurthy_2016}:
\begin{align}
b_{t+1}(s_{t+1}) &= C\mathcal{Z}(o_{t+1}, s_{t+1}, a_t)\sum_{s_t \in \mathcal{S}}\mathcal{P}^{a_t}_{s_ts_{t+1}}b_t(s_t)\label{eq:belief_upd}
\end{align}
where $C=1/\sum_{s_{t+1} \in S}\mathcal{Z}(o_{t+1},s_{t+1},a_t) \sum_{s_t \in S}\mathcal{P}_{s_t,s_{t+1}}^{a_t}b_t(s_t)$ is a normalizing factor independent of $s_{t+1}$ to make $b_{t+1}$ sum to $1$.
\subsection{The Reinforcement Learning Problem}\label{sec:rl_prob2}
Reinforcement learning deals with the problem of choosing a sequence of actions for a sequentially observed state variable to maximize a reward function \cite{BertsekasTsitsiklis96,rl_bible}. This problem can be modeled with an MDP if the state space is observable, or with a POMDP if the state space is not fully observable.

In the context of an MDP, a policy is defined as a function $\pi: \{1,\hdots, T\} \times \mathcal{S} \rightarrow \Delta(\mathcal{A})$, where $\Delta(\mathcal{A})$ denotes the set of probability distributions over $\mathcal{A}$. In the case of a POMDP, a policy is defined as a function $\pi: \mathcal{H} \rightarrow \Delta(\mathcal{A})$, or, alternatively, as a function $\pi: \{1,\hdots, T\} \times \mathcal{B} \rightarrow \Delta(\mathcal{A})$. In both cases, a policy is called \textit{stationary} if it is independent of the time-step $t$.

An optimal policy $\pi^{*}$ is a policy that maximizes the expected discounted cumulative reward over the time horizon $T$:
\begin{align}
\pi^{*} &\in \argmax_{\pi \in \Pi} \mathbb{E}_{\pi}\left[\sum_{t=1}^{T}\gamma^{t-1}r_{t}\right] \label{eq:rl_prob}
\end{align}
where $\Pi$ is the policy space, $\gamma$ is the discount factor, $r_t$ is the reward at time $t$, and $\mathbb{E}_{\pi}$ denotes the expectation under $\pi$.

It is well known that optimal \textit{deterministic} policies exist for MDPs and POMDPs with finite state and action spaces \cite{puterman,krishnamurthy_2016}. Further, for stationary MDPs and POMDPs with infinite or random time-horizons, optimal \textit{stationary} policies exist \cite{puterman,krishnamurthy_2016}.

The Bellman equations relate any optimal policy $\pi^{*}$ to the two value functions $V^{*} : \mathcal{S} \rightarrow \mathbb{R}$ and $Q^{*}: \mathcal{S} \times \mathcal{A} \rightarrow \mathbb{R}$, where $\mathcal{S}$ and $\mathcal{A}$ are state and action spaces of an MDP \cite{bellman_eq}:
\begin{align}
V^{*}(s_t) &= \displaystyle\max_{a_t\in \mathcal{A}} \mathbb{E}\big[r_{t+1} + \gamma V^{*}(s_{t+1}) | s_t, a_t\big]\label{eq:bellman_eq_31} \\
Q^{*}(s_t,a_t) &= \mathbb{E}\big[r_{t+1} + \gamma V^{*}(s_{t+1}) | s_t, a_t\big] \label{eq:bellman_eq_33}\\
\pi^{*}(s_t) &\in \argmax_{a_t\in \mathcal{A}} Q^*(s_t,a_t)\label{eq:bellman_eq_34}
\end{align}
Here, $V^{*}(s_t)$ and $Q^{*}(s_t,a_t)$ denote the expected cumulative discounted reward under $\pi^{*}$ for each state and state-action pair, respectively. In the case of a POMDP, the Bellman equations contain $b_t$ instead of $s_t$. Solving the Bellman equations (Eqs. \ref{eq:bellman_eq_31}-\ref{eq:bellman_eq_33}) means computing the value functions from which an optimal policy can be obtained (Eq. \ref{eq:bellman_eq_34}).

Two principal methods are used for finding an optimal policy in a MDP or POMDP: dynamic programming and reinforcement learning.

First, the dynamic programming method (e.g. value iteration \cite{bert05,puterman}) assumes complete knowledge of the seven-tuple MDP or the nine-tuple POMDP and obtains an optimal policy by solving the Bellman equations iteratively (Eq. \ref{eq:bellman_eq_34}), with polynomial time-complexity per iteration for MDPs and PSPACE-complete time-complexity for POMDPs \cite{pspace_complexity}.

Second, the reinforcement learning method computes or approximates an optimal policy without requiring complete knowledge of the transition probabilities or observation probabilities of the MDP or POMDP. Three classes of reinforcement learning algorithms exist: \textit{value-based algorithms}, which approximate solutions to the Bellman equations (e.g. Q-learning \cite{watkins_thesis}); \textit{policy-based algorithms}, which directly search through policy space using gradient-based methods (e.g. Proximal Policy Optimization (PPO) \cite{ppo}); and \textit{model-based algorithms}, which learn the transition or observation probabilities of the MDP or POMDP (e.g. Dyna-Q \cite{rl_bible}). The three algorithm types can also be combined, e.g. through \textit{actor-critic} algorithms, which are mixtures of value-based and policy-based algorithms \cite{rl_bible}. In contrast to dynamic programming algorithms, reinforcement learning algorithms generally have no guarantees to converge to an optimal policy except for the tabular case \cite{jaakola_convergence_Q_NIPS1993, robbins_monro}.
\subsection{Markovian Optimal Stopping Problems}\label{sec:optimal_stopping}
Optimal stopping is a classical problem in statistics with a developed theory \cite{wald,shirayev,stopping_book_1,bert05,puterman}. Example applications of this problem are: selling an asset \cite{bert05}, detecting distribution changes \cite{tartakovsky_1}, machine replacement \cite{krishnamurthy_2016}, valuing a financial option \cite{optimal_stopping_finance}, and choosing a candidate for a job (the secretary problem) \cite{puterman}.

Different versions of the problem can be found in the literature. Including, discrete-time and continuous time, finite horizon and infinite horizon, single-stop and multiple stops, fully observed and partially observed, independent and dependent, and Markovian and non-Markovian. Consequently, there are also different solution methods, most prominent being the martingale approach \cite{stopping_book_1} and the Markovian approach \cite{shirayev,bert05,puterman}. In this paper, we consider a partially observed Markovian optimal stopping problem in discrete-time with a finite horizon and a single stop action.

A Markovian optimal stopping problem can be seen as a specific kind of MDP or POMDP where the state of the environment evolves as a discrete-time Markov process $(s_t)_{t=1}^{T}$ which is either fully or partially observed \cite{puterman,krishnamurthy_2016}. At each time-step $t$ of this decision process, two actions are available: ``stop'' and ``continue''. The \textit{stop} action causes the interaction with the environment to stop and yields a stopping-reward. Conversely, the \textit{continue} action causes the environment to evolve to the next time-step and yields a continuation-reward. The \textit{stopping time} $\tau$ is a random variable dependent on $s_1,\hdots,s_t$ and independent of $s_{t+1},\hdots s_{T}$ \cite{stopping_book_1}.

The objective is to find a stopping policy $\pi(s_t) \mapsto \{S,C\}$ that maximizes the expected reward, where $\pi(s_t) = S$ indicates a stopping action. This induces the following maximization at each time-step before stopping (the Bellman equation \cite{bellman_eq}):
\begin{align}
\max \Bigg[\underbrace{\mathbb{E}\left[\mathcal{R}^{S}_{ss^{\prime}}\right]}_{\text{stopping reward}}, \underbrace{\mathbb{E}\left[\mathcal{R}^{C}_{ss^{\prime}} + \gamma V^{*}(s^{\prime})\right]}_{\text{continuation reward}}\Bigg]\label{eq:optimal_stopping_1}
\end{align}
To solve the maximization above, standard solution methods for MDPs and POMDPs can be applied, such as dynamic programming and reinforcement learning \cite{bert05,roy_threshold}. Further, the solution can be characterized using dynamic programming theory as the least excessive (or superharmonic) majorant of the reward function, or using martingale theory as the Snell envelope of the reward function \cite{Snell1952TAMS,stopping_book_1}.
\section{Formalizing The Intrusion Prevention Use Case and Our Reinforcement Learning Approach}\label{sec:formal_model_2}
In this section, we first formalize the intrusion prevention use case described in Section \ref{sec:use_case} and then we introduce our solution method. Specifically, we first define a POMDP model of the intrusion prevention use case. Then, we describe our reinforcement learning approach to approximate the optimal defender policy. Lastly, we use the theory of dynamic programming to derive the threshold property of the optimal policy.
\subsection{A POMDP Model of the Intrusion Prevention Use Case}
We model the intrusion prevention use case as a partially observed optimal stopping problem where an intrusion starts at a geometrically distributed time and the stopping action refers to blocking the gateway (Fig. \ref{fig:stopping_times}). This type of optimal stopping problem is often referred to as a \textit{quickest change detection} problem \cite{stopping_book_1,shirayev,tartakovsky_1}.

To formalize this model, we use a POMDP. This model includes the state space and the observation space of the defender. It further includes the initial state distribution, the defender actions, the transition probabilities, the observation function, the reward function, and the optimization objective.
\begin{figure}
  \centering
  \scalebox{1.15}{
    \begin{tikzpicture}[fill=white, >=stealth,
    node distance=3cm,
    database/.style={
      cylinder,
      cylinder uses custom fill,
      shape border rotate=90,
      aspect=0.25,
      draw}]

    \tikzset{
node distance = 9em and 4em,
sloped,
   box/.style = {%
    shape=rectangle,
    rounded corners,
    draw=blue!40,
    fill=blue!15,
    align=center,
    font=\fontsize{12}{12}\selectfont},
 arrow/.style = {%
    line width=0.1mm,
    -{Triangle[length=5mm,width=2mm]},
    shorten >=1mm, shorten <=1mm,
    font=\fontsize{8}{8}\selectfont},
}

\draw[->, color=black] (0, 0) to (5.9, 0);
\draw[-, color=black] (0, -0.12) to (0, 0.12);
\draw[-, color=black] (0.5, -0.12) to (0.5, 0.12);
\draw[-, color=black] (1, -0.12) to (1, 0.12);
\draw[-, color=black] (1.5, -0.12) to (1.5, 0.12);
\draw[-, color=black] (2, -0.12) to (2, 0.12);
\draw[-, color=black] (2.5, -0.12) to (2.5, 0.12);
\draw[-, color=black] (3, -0.12) to (3, 0.12);
\draw[-, color=black] (3.5, -0.12) to (3.5, 0.12);
\draw[-, color=black] (4, -0.12) to (4, 0.12);
\draw[-, color=black] (4.5, -0.12) to (4.5, 0.12);
\draw[-, color=black] (5, -0.12) to (5, 0.12);
\draw[-, color=black] (5.5, -0.12) to (5.5, 0.12);

\draw[-, color=black, dashed] (3.5, 1) to (3.5, -1);

\node[inner sep=0pt,align=center, scale=1] (time) at (3.5,0)
{
\Crossk
};

\node[inner sep=0pt,align=center, scale=0.75] (time) at (1.9,0.75)
{
Intrusion event
};

\node[inner sep=0pt,align=center, scale=0.75] (time) at (0.1,0.55)
{
time-step $t=1$
};

\draw[->, color=black] (0, 0.43) to (0, 0.16);

\node[inner sep=0pt,align=center, scale=0.75] (time) at (4.65,0.65)
{
Intrusion ongoing
};

\node[inner sep=0pt,align=center, scale=0.75] (time) at (6.1,0)
{
$t$
};

\node[inner sep=0pt,align=center, scale=0.75] (time) at (5.56,-0.35)
{
$t=T$
};

\node[inner sep=0pt,align=center, scale=0.75] (time) at (1,-1)
{
Early stopping times
};
\draw[->, color=black] (1, -0.85) to (0.0, -0.25);
\draw[->, color=black] (1, -0.85) to (0.5, -0.25);
\draw[->, color=black] (1, -0.85) to (1, -0.25);
\draw[->, color=black] (1, -0.85) to (1.5, -0.25);
\draw[->, color=black] (1, -0.85) to (2, -0.25);
\draw[->, color=black] (1, -0.85) to (2.5, -0.25);
\draw[->, color=black] (1, -0.85) to (3, -0.25);

\node[inner sep=0pt,align=left, scale=0.75] (time) at (5,-1.05)
{
  Stopping times that \\
  interrupt the intrusion
};

\draw[->, color=black] (4.55, -0.75) to (3.5, -0.15);
\draw[->, color=black] (4.55, -0.75) to (4, -0.25);
\draw[->, color=black] (4.55, -0.75) to (4.5, -0.25);
\draw[->, color=black] (4.55, -0.75) to (5, -0.25);

\draw [decorate,decoration={brace,amplitude=5pt,mirror,raise=4pt},yshift=0pt,rotate=180, line width=0.25mm]
(-5.45,-0.1) -- (-3.55,-0.1) node [black,midway,xshift=0.1cm] {};

\draw[->, color=black] (2.1,0.55) to (3.3,0.15);

\draw [decorate,decoration={brace,amplitude=5pt,mirror,raise=4pt},yshift=0pt,rotate=180, line width=0.25mm]
(-5.45,-0.75) -- (0,-0.75) node [black,midway,xshift=0.1cm] {};

\node[inner sep=0pt,align=left, scale=0.75] (time) at (2.7,1.25)
{
Episode
};

\end{tikzpicture}
    }
    \caption{Optimal stopping formulation of intrusion prevention; the horizontal axis represents time; $T$ is the time horizon; the episode length is $T-1$; the dashed line shows the intrusion start time; the optimal policy is to stop at the time of intrusion.}
    \label{fig:stopping_times}
  \end{figure}
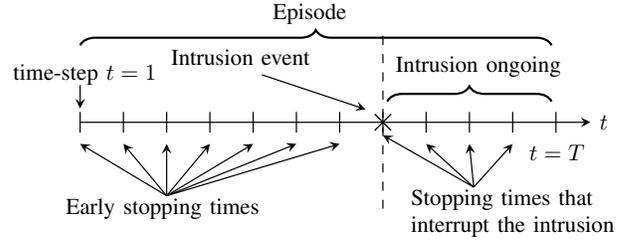
\subsubsection{States $\mathcal{S}$, Initial State Distribution $\rho_1$, and Observations $\mathcal{O}$}
The system state $s_t$ is defined by the intrusion state $i_t \in \{0,1\}$, where $i_t=1$ if an intrusion is ongoing. Further, we introduce a terminal state $\emptyset$, which is reached either when the defender stops or when the attacker completes an intrusion. Thus, $\mathcal{S} = \{0,1\} \cup \emptyset$.

At time $t=1$ no intrusion is in progress. Hence, the initial state distribution is the degenerate distribution $\rho_1(s_1 = 0) = 1$.

The defender has a partial view of the system state and does not know whether an intrusion is in progress. Specifically, if the defender has not stopped, it observes three counters $o_t = (\Delta x_t, \Delta y_t, \Delta z_t)$. The counters are upper bounded, where $\Delta x_t\in \{0, ,\hdots,X_{max}\}$, $\Delta y_t\in \{0, \hdots, Y_{max}\}$, $\Delta z_t\in \{0, \hdots, Z_{max}\}$ denote the number of severe IDS alerts, warning IDS alerts, and login attempts generated during time-step $t$, respectively. Otherwise, if the defender has stopped, it observes $o_t=s_t=\emptyset$. Consequently, $\mathcal{O} = \{0, \hdots, X_{max}\} \times \{0, \hdots, Y_{max}\} \times \{0, \hdots, Z_{max}\} \cup \emptyset$.
\subsubsection{Actions $\mathcal{A}$}\label{sec:actions}
The defender has two actions: ``stop'' ($S$) and ``continue'' ($C$). The action space is thus $\mathcal{A} = \{S,C\}$.
\subsubsection{Transition Probabilities $ \mathcal{P}^a_{ss^{\prime}}$}
We model the start of an intrusion by a Bernoulli process $(Q_t)_{t=1}^{T}$, where $Q_t \sim Ber(p=0.2)$ is a Bernoulli random variable. The time $t$ of the first occurrence of $Q_t=1$ is the change point representing the start time of the intrusion $I_t$, which thus is geometrically distributed, i.e. $I_t \sim Ge(p=0.2)$ (Fig. \ref{fig:reward_fun}). As the geometric distribution has the memoryless property, the intrusion start time is Markovian.

We define the transition probabilities $\mathcal{P}^{a_t}_{s_ts_{t+1}} =\mathbb{P}\left[s_{t+1}| s_t, a_t\right]$ as follows:
\begin{align}
&\mathbb{P}\left[\emptyset \middle| \cdot ,S \right] = \mathbb{P}\left[\emptyset \middle| \emptyset,\cdot\right]=1 \label{eq:tp_1}\\
&\mathbb{P}\left[0 \middle|0, C \right] = 1-p \label{eq:tp_2}\\
&\mathbb{P}\left[1 \middle|0,C\right]= p \label{eq:tp_3}\\
&\mathbb{P}\left[1 \middle|1, C \right] = 1 \label{eq:tp_4}
\end{align}
All other transitions have probability $0$.

Eq. \ref{eq:tp_1} defines the transition probabilities to the terminal state $\emptyset$. The terminal state is reached when taking the stop action $S$. Eq. \ref{eq:tp_2}-\ref{eq:tp_4} define the transition probabilities when taking the continue action $C$. Eq. \ref{eq:tp_2} captures the case where no intrusion occurs and where $i_{t+1}=i_t=0$; Eq. \ref{eq:tp_3} captures the start of an intrusion where $i_t=0, i_{t+1}=1$; and Eq. \ref{eq:tp_4} describes the case where an intrusion is in progress and $i_{t+1}=i_t = 1$.

\subsubsection{Observation Function $\mathcal{Z}(o^{\prime},s^{\prime},a)$}
We assume that the number of IDS alerts and login attempts generated during a single time-step are random variables $X \sim f_X$, $Y \sim f_Y$, $Z \sim f_Z$, dependent on the intrusion state and defined on the sample spaces $\Omega_X = \{0,1,\hdots, X_{max}\}$, $\Omega_Y = \{0,1,\hdots, Y_{max}\}$, and $\Omega_Z = \{0,1,\hdots, Z_{max}\}$. Consequently, the probability that $\Delta x$ severe alerts, $\Delta y$ warning alerts, and $\Delta z$ login attempts are generated during time-step $t$ is $f_{XYZ}(\Delta x,\Delta y,\Delta z|i_t)$.

We define the observation function $\mathcal{Z}(o^{\prime},s^{\prime},a) =\mathbb{P}[o^{\prime}|s^{\prime},a]$ as follows:
\begin{align}
&\mathcal{Z}\big((\Delta x,\Delta y, \Delta z),i_t,C\big) = f_{XYZ}(\Delta x,\Delta y, \Delta z | i_t)\\
&\mathcal{Z}\big(\emptyset,\emptyset,\cdot\big) = 1
\end{align}
\begin{figure}
  \centering
    \scalebox{0.43}{
      \includegraphics{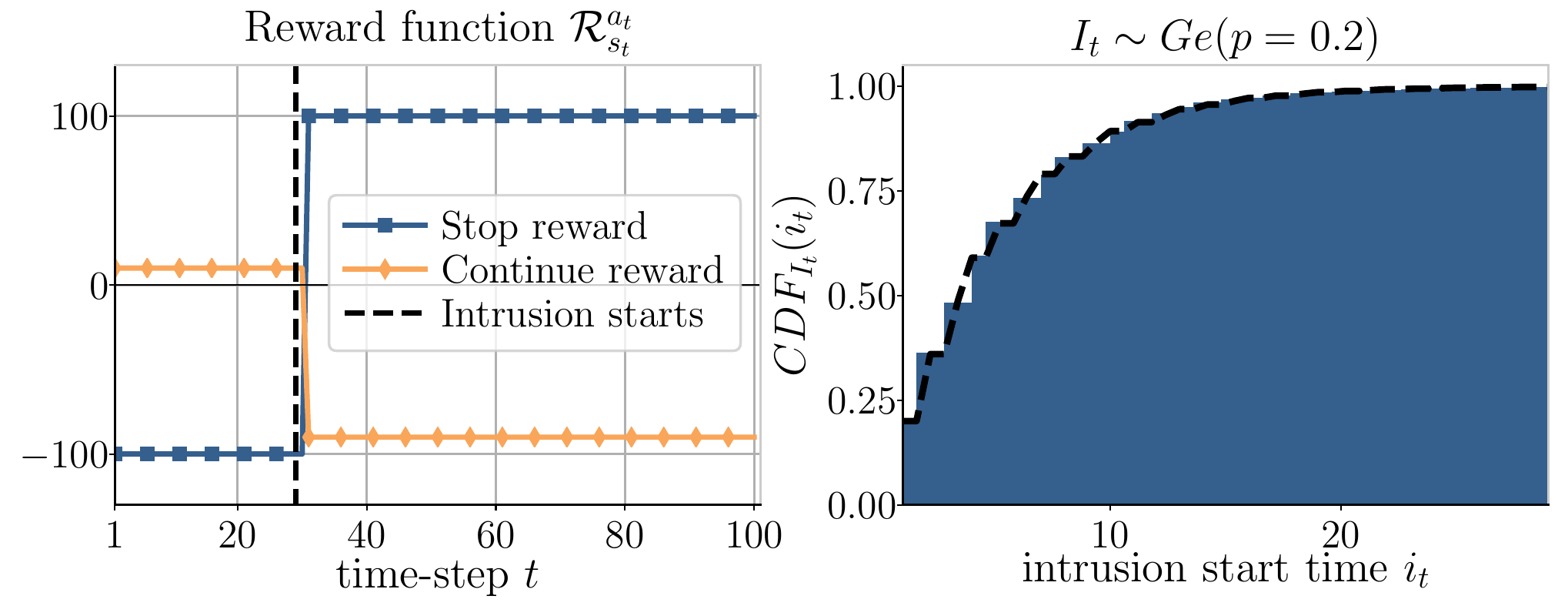}
    }
    \caption{Left: the reward function for the stop and continue actions; the intrusion starts at $t=29$; right: the cumulative distribution function (CDF) of the intrusion start time.}
    \label{fig:reward_fun}
  \end{figure}
\subsubsection{Reward Function $\mathcal{R}_{s}^a$}\label{sec:reward_fun}
The reward function is parameterized by the reward that the defender receives for stopping an intrusion ($R_{st}=100$), the loss of stopping before an intrusion has started ($R_{es}=-100$), the reward for maintaining service ($R_{sla}=10$), and the loss of being intruded ($R_{int}=-100$), respectively.

We define the deterministic reward function $\mathcal{R}^{a_t}_{s_t}= r(s_t, a_t)$ to be (Fig. \ref{fig:reward_fun}):
\begin{align}
&r\left(\emptyset, \cdot\right) = 0 \label{eq:reward_0}\\
&r\left(i_t, S \right) = \mathbbm{1}_{i_t=0} R_{es} + \mathbbm{1}_{i_t=1}R_{st} \label{eq:reward_2}\\
&r\left(i_t, C \right) = R_{sla} + \mathbbm{1}_{i_t=1}R_{int}\label{eq:reward_3}
\end{align}
Eq. \ref{eq:reward_0} states that the reward in the terminal state is zero. Eq. \ref{eq:reward_2} indicates that stopping an intrusion incurs a reward but stopping before an intrusion starts yields a loss, where $S$ is the stop action and $\mathbbm{1}$ is the indicator function. Lastly, as can be seen from Eq. \ref{eq:reward_3}, the defender receives a positive reward for maintaining service and a loss for taking the continue action $C$ while under intrusion. This means that the maximal reward is received if the defender stops when an intrusion starts.
\subsubsection{Time Horizon $T_{\emptyset}$}
The time horizon is defined by the time-step when the terminal state $\emptyset$ is reached, which is a random variable $T_{\emptyset}$. Since we know that the expectation of the intrusion time $I_t$ is finite, we conclude that the horizon is finite for any policy $\pi_{\theta}$ that is guaranteed to stop: $\mathbb{E}_{\pi_{\theta}}\left[T_{\emptyset}\right] < \infty$.
\subsubsection{Policy Space $\Pi_{\theta}$, and Objective $J$}
Since the POMDP is stationary and the time horizon $T_{\emptyset}$ is not pre-determined, it is sufficient to consider stationary policies. Further, although an optimal deterministic policy exists \cite{puterman,krishnamurthy_2016}, we consider stochastic policies to allow smooth optimization. Specifically, we consider the space of stationary stochastic policies $\Pi_{\theta}$ where $\pi_{\theta} \in \Pi_{\theta}$ is a policy $\pi: \mathcal{H} \rightarrow \Delta(\mathcal{A})$, which is parameterized by a vector $\theta \in \mathbb{R}^d$.


The optimal policy $\pi^{*}_{\theta} \in \Pi_{\theta}$ maximizes the expected cumulative reward over the random horizon $T_{\emptyset}$:
\begin{align}
J(\theta) = \mathbb{E}_{\pi_{\theta}}\left[\sum_{t=1}^{T_{\emptyset}}\gamma^{t-1}r(s_t, a_t) \right], \text{ } \pi^{*} = \argmax_{\pi_{\theta} \in \Pi_{\theta}}J(\theta)\label{eq:rl_prob2}
\end{align}
where we set the discount factor $\gamma=1$.

Eq. \ref{eq:rl_prob2} defines the objective of the optimal stopping problem. In the following section, we describe our approach for solving this problem using reinforcement learning.
\subsection{Our Reinforcement Learning Approach}\label{sec:rl_approach}
Since the POMDP model is \textit{unknown} to the defender, we use a model-free reinforcement learning approach to approximate the optimal policy. Specifically, we use the state-of-the-art reinforcement learning algorithm PPO \cite{ppo} to learn a policy $\pi_{\theta}: \mathcal{H} \mapsto \mathcal{A}$ that maximizes the objective in Eq. \ref{eq:rl_prob2}.

Due to computational limitations (i.e. finite memory), we summarize the history $h_t=(a_1,o_1,a_2,o_2,\hdots, a_{t-1},o_t)$ by a vector $\hat{h}_t = (x_t, y_t, z_t, t)$, where $x_t, y_t, z_t$ are the accumulated counters of the observations $o_i=(\Delta x_i, \Delta y_i, \Delta z_i)$ for $i=1,\hdots,t$: $x_t=\sum_{i=1}^t\Delta x_i$, $y_t=\sum_{i=1}^t\Delta y_i$, $z_t=\sum_{i=1}^t\Delta z_i$.

PPO implements the \textit{policy gradient} method and uses stochastic gradient ascent with the following gradient \cite{ppo}:
\begin{align}
\nabla_{\theta} J(\theta) &= \mathbb{E}_{\pi_{\theta}}\Bigg[\underbrace{\nabla_{\theta}\log\pi_{\theta}(a|\hat{h})}_{\text{actor}} \underbrace{A^{\pi_{\theta}}(\hat{h},a)}_{\text{critic}}\Bigg] \label{eq:pg_objective}
\end{align}
where $A^{\pi_{\theta}}(\hat{h}_t,a_t) = Q^{\pi_{\theta}}(\hat{h}_t,a_t) - V^{\pi_{\theta}}(\hat{h}_t)$ is the so-called \textit{advantage function} \cite{gae}. We implement $\pi_{\theta}$ with a deep neural network that takes as input the summarized history $\hat{h}_t$ and produces as output a discrete conditional probability distribution $\pi_{\theta}(a_t|\hat{h}_t)$ that is computed with the softmax function. The neural network structure of $\pi_{\theta}$ follows an actor-critic architecture and computes a second output (the critic) that estimates the value function $V_{\theta}^{\pi_{\theta}}(\hat{h}_t)$, which in turn allows to estimate $A^{\pi_{\theta}}(\hat{h}_t,a_t)$ in Eq. \ref{eq:pg_objective} using the \textit{generalized advantage estimator} $\hat{A}_{GAE}^{\pi_{\theta}}$ \cite{gae}.

The hyperparameters of our implementation are given in Appendix \ref{appendix:hyperparameters} and were decided based on smaller search in parameter space.

The defender policy is learned through simulation of the POMDP. First, we simulate a given number of episodes. We then use the episode outcomes and trajectories to estimate the expectation of the gradient in Eq. \ref{eq:pg_objective}. Then, we use the estimated gradient and the PPO algorithm \cite{ppo} with the Adam optimizer \cite{adam_opt} to update the policy. This process of simulating episodes and updating the policy continues until the policy has sufficiently converged.
\subsection{Threshold Property of the Optimal Policy}\label{sec:dp_opt}
The policy that solves the optimal stopping problem is defined by the optimization objective in Eq. \ref{eq:rl_prob2}. From the theory of dynamic programming, we know that this policy satisfies the Bellman equation \cite{bellman1957markovian,bert05,krishnamurthy_2016,pomdp_belief_optimal}:
\begin{align}
\pi^{*}\big(b(1)\big) &= \argmax_{a \in \mathcal{A}} \Bigg[r\big(b(1),a\big) + \sum_{o\in \mathcal{O}}\mathbb{P}[o| b(1), a]V^{*}\big(b_o^{a}(1)\big)\Bigg] \label{eq:bellman_belief}
\end{align}
where $b(1) = \mathbb{P}[s_t=1|h_t]$ is the belief that the system is in state $1$ based on the observed history $h_t=(a_1,o_1,\hdots,a_{t-1},o_t)$. Consequently, $b(0)=1-b(1)$ (see Section \ref{sec:mdps} for an overview of belief states). Moreover, $b^{a}_{o}(1)$ is the belief state updated with the Bayes filter in Eq. \ref{eq:belief_upd} after taking action $a$ and observing $o$. Further, $r\big(b(1), a\big)$ is the expected reward of taking action $a$ in belief state $b(1)$, and $V^{*}$ is the value function.

We use Eq. \ref{eq:bellman_belief} to derive properties of the optimal policy. Specifically, we establish the following structural result.
\begin{theorem}\label{thm:structural_result}
There exists an optimal policy $\pi^{*}$ which is a threshold policy of the form:
\begin{align}
\pi^{*}\big(b(1)\big)=
\begin{dcases}
  S \text{ (stop)} & \quad \text{if $b(1) \geq \alpha^{*}$} \\
  C \text{ (continue)}& \quad \text{otherwise}
\end{dcases}\label{eq:structural_result}
\end{align}
where $\alpha^{*}$ is a threshold.
\end{theorem}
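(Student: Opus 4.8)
The plan is to pass to the equivalent belief MDP of Section~\ref{sec:mdps}, observe that the non-terminal belief space is the one-dimensional simplex parametrised by the scalar $b(1)\in[0,1]$, and then show that the \emph{stopping region} $\mathcal{B}_S := \{\,b(1)\in[0,1] : V^{*}(b(1)) = r(b(1),S)\,\}$ --- the set of beliefs at which the stop action attains the maximum in Eq.~\eqref{eq:bellman_belief} --- is an upward-closed interval $[\alpha^{*},1]$. Since an optimal policy may be chosen to play $S$ exactly on $\mathcal{B}_S$ and $C$ on its complement, this immediately yields the threshold form in Eq.~\eqref{eq:structural_result}. The argument rests on three facts: (i) the stopping value $r(b(1),S) = b(1)R_{st} + (1-b(1))R_{es}$ is an \emph{affine} function of $b(1)$ (indeed $Q^{*}(b(1),S) = r(b(1),S)$, because stopping leads to the terminal state, where $V^{*}(\emptyset)=0$); (ii) $V^{*}$ is \emph{convex} on $[0,1]$; and (iii) $b(1)=1$ always belongs to $\mathcal{B}_S$.

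The technical heart is (ii). I would run value iteration on the belief MDP with Bellman operator $(\mathcal{T}V)(b(1)) = \max\{\,r(b(1),S),\ r(b(1),C) + \sum_{o\in\mathcal{O}}\mathbb{P}[o\mid b(1),C]\,V(b^{C}_{o}(1))\,\}$ and $(\mathcal{T}V)(\emptyset)=0$, started from the affine (hence convex) function $V_0 = r(\cdot,S)$, and show that $\mathcal{T}$ preserves convexity. The only non-routine term is the predicted value $\sum_{o}\mathbb{P}[o\mid b(1),C]\,V(b^{C}_{o}(1))$: from Eq.~\eqref{eq:belief_upd} one checks that, for each fixed $o$, both $\mathbb{P}[o\mid b(1),C]$ and the unnormalised posterior mass on state $1$ are \emph{affine} in $b(1)$, so $\mathbb{P}[o\mid b(1),C]\,V(b^{C}_{o}(1))$ is the \emph{perspective} of the convex function $V$ composed with an affine map, hence convex in $b(1)$; summing over $o$, adding the affine term $r(\cdot,C)$, and taking the pointwise maximum with the affine $r(\cdot,S)$ all preserve convexity. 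Thus every $V_n = \mathcal{T}^{n}V_0$ is convex; since the rewards are bounded and, under any proper (eventually-stopping) policy, the horizon $T_{\emptyset}$ has finite expectation (as noted above), value iteration converges pointwise to $V^{*}$, and a pointwise limit of convex functions is convex. Alternatively one may simply invoke the classical convexity of the POMDP value function \cite{krishnamurthy_2016,pomdp_belief_optimal}.

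Given (i) and (ii), $\mathcal{B}_S$ is convex: since $V^{*} = \max_{a}Q^{*}(\cdot,a) \ge Q^{*}(\cdot,S) = r(\cdot,S)$ everywhere, for $b_1(1),b_2(1)\in\mathcal{B}_S$ and $b_\lambda(1)=\lambda b_1(1)+(1-\lambda)b_2(1)$ convexity of $V^{*}$ and affineness of $r(\cdot,S)$ give $r(b_\lambda(1),S)\le V^{*}(b_\lambda(1))\le \lambda V^{*}(b_1(1))+(1-\lambda)V^{*}(b_2(1)) = \lambda r(b_1(1),S)+(1-\lambda)r(b_2(1),S) = r(b_\lambda(1),S)$, forcing equality and hence $b_\lambda(1)\in\mathcal{B}_S$. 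For (iii): from $b(1)=1$ the state is $1$ with certainty, and by Eq.~\eqref{eq:tp_4} the continue action keeps the state at $1$ whatever the observation, so $V^{*}(1)$ solves $V^{*}(1)=\max\{\,R_{st},\ R_{sla}+R_{int}+V^{*}(1)\,\}$; since $R_{sla}+R_{int}=-90<0$ (and $\gamma=1$) the continue branch is strictly dominated, so $V^{*}(1)=R_{st}=r(1,S)$ and $1\in\mathcal{B}_S$. Finally, continuity of the finite convex function $V^{*}$ on $[0,1]$ makes $\mathcal{B}_S$ closed, so it is a closed subinterval of $[0,1]$ containing $1$; hence $\mathcal{B}_S=[\alpha^{*},1]$ with $\alpha^{*}:=\inf\mathcal{B}_S$, and the policy that stops iff $b(1)\ge\alpha^{*}$ is optimal, which is exactly Eq.~\eqref{eq:structural_result}.

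I expect the main obstacle to be making step (ii) fully rigorous in this undiscounted ($\gamma=1$), random-horizon setting: one must justify that $V^{*}$ is finite and that value iteration converges, which needs attention to proper policies and to the terminal state. A lighter alternative avoids the perspective argument for the predicted-value term by showing directly that the advantage $g(b(1)) := r(b(1),S) - r(b(1),C) - \sum_{o}\mathbb{P}[o\mid b(1),C]\,V^{*}(b^{C}_{o}(1))$ is concave (affine minus convex) and combining $g(1)>0$ with concavity to conclude $\{g\ge 0\}=[\alpha^{*},1]$; a third route is the monotone-likelihood-ratio / total-positivity machinery for optimal-stopping POMDPs in \cite{krishnamurthy_2016}, which for this two-state model collapses to essentially the same conditions.
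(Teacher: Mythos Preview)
Your proposal is correct and follows essentially the same route as the paper: invoke convexity of $V^{*}$ on the one-dimensional belief simplex (the paper cites Sondik/Krishnamurthy rather than proving it via the perspective argument), combine it with affineness of $Q^{*}(\cdot,S)$ to get convexity of the stopping set, and then show $b(1)=1\in\mathcal{B}_S$ via the absorbing property of state $1$ to conclude $\mathcal{B}_S=[\alpha^{*},1]$. Your treatment is in fact slightly more careful than the paper's in flagging the $\gamma=1$ convergence issue and in noting closedness of $\mathcal{B}_S$; the paper additionally states an explicit threshold formula (its Lemma~\ref{lemma:belief_condition}) but does not actually use it in the proof of the theorem.
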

\begin{proof}
See Appendix \ref{appendix:structural_result_proof}.
\end{proof}
Theorem \ref{thm:structural_result} states that there exists an optimal policy which stops whenever the posterior probability that an intrusion has started based on the history of IDS alerts and login attempts exceeds a threshold level $\alpha^{*}$. This implies that the optimal policy is completely determined by $\alpha^{*}$ given that $b(1)$ is known. Since $b(1)$ is computed from the history of observations, it also implies that the optimal defender policy can expressed as a threshold function based on the observed infrastructure metrics.
\section{Emulating the Target Infrastructure to Instantiate the Simulation}\label{sec:policy_learning_results}
To simulate episodes of the POMDP we must know the distributions of alerts and login attempts. We estimate these distributions using measurements from an emulation system. This procedure is detailed in this section.
\subsection{Emulating the Target Infrastructure}
The emulation system executes on a cluster of machines that runs a virtualization layer provided by Docker \cite{docker} containers and virtual connections. The emulation is configured following the topology in Fig. \ref{fig:system2} and the configuration in Appendix \ref{appendix:infrastructure_configuration}. It emulates the clients, the attacker, and the defender, as well as $31$ physical components of the target infrastructure (e.g application servers and the gateway). Each physical entity is emulated using a Docker container. The containers replicate important functions of the target infrastructure, including web servers, databases, SSH servers, etc.

The emulation evolves in discrete time-steps of $30$ seconds. During each time-step, the attacker and the defender can perform one action each.
\subsubsection{Emulating the Client Population}
The client population is emulated by three client processes that interact with the application servers through different functions at short intervals, see Table \ref{tab:client_profiles}.
\begin{table}
\centering
\resizebox{0.85\columnwidth}{!}{%
\begin{tabular}{lll} \toprule
  {\textit{Client}} & {\textit{Functions}} & {\textit{Application servers}} \\ \midrule
  $1$ & HTTP, SSH, SNMP, ICMP & $N_2,N_3,N_{10},N_{12}$\\
  $2$ & IRC, PostgreSQL, SNMP & $N_{31},N_{13},N_{14},N_{15},N_{16}$\\
  $3$ & FTP, DNS, Telnet & $N_{10}, N_{22}, N_{4}$ \\
  \bottomrule\\
\end{tabular}
}
\caption{Emulated client population; each client interacts with application servers using a set of functions.}\label{tab:client_profiles}
\end{table}
\subsubsection{Emulating the Attacker}\label{sec:emu_attack}
The start time of an intrusion is controlled by a Bernoulli process as explained in Section \ref{sec:formal_model_2}. We have implemented two types of attackers, \textsc{NoisyAttacker} and \textsc{StealthyAttacker}, both of which execute the sequence of actions listed in Table \ref{tab:static_attackers}. The actions consist of reconnaissance commands and exploits. During each time-step, one action is executed.

The two types of attackers differ in the reconnaissance command. \textsc{NoisyAttacker} uses a TCP/UDP scan for reconnaissance while \textsc{StealthyAttacker} uses a ping-scan. Since the ping-scan generates fewer IDS alerts than the TCP/UDP scan, it makes the actions of \textsc{StealthyAttacker} harder to detect.
\begin{table}
\centering
\resizebox{0.95\columnwidth}{!}{%
\begin{tabular}{ll} \toprule
  {\textit{Time-steps $t$}} & {\textit{Actions}} \\ \midrule
  $1$--$I_t\sim Ge(0.2)$ & (Intrusion has not started) \\
  $I_t+1$--$I_t+7$ & \textsc{Recon}, brute-force attacks (SSH,Telnet,FTP) \\
                           & on $N_2,N_{4},N_{10}$, login($N_2,N_4,N_{10}$), \\
                           & backdoor($N_2,N_4,N_{10}$), \textsc{Recon}\\
  $I_t+8$--$I_t+11$ & CVE-2014-6271 on $N_{17}$, SSH brute-force attack on $N_{12}$, \\
  & login ($N_{17},N_{12}$), backdoor($N_{17},N_{12}$)\\
  $I_t+12$--$X+16$ & CVE-2010-0426 exploit on $N_{12}$, \textsc{Recon}\\
  &  SQL-Injection on $N_{18}$, login($N_{18}$), backdoor($N_{18}$)\\
  $I_t+17$--$I_t+22$ & \textsc{Recon}, CVE-2015-1427 on $N_{25}$, login($N_{25}$)\\
  & \textsc{Recon}, CVE-2017-7494 exploit on $N_{27}$, login($N_{27}$)\\
  \bottomrule\\
\end{tabular}
}
\caption{Attacker actions to emulate an intrusion.}\label{tab:static_attackers}
\end{table}
\subsubsection{Emulating Actions of the Defender}
The defender takes an action every time-step. The continue action has no effect on the emulation. The stop action changes the firewall configuration of the gateway and drops all incoming traffic.
\subsection{Estimating the Distributions of Alerts and Login Attempts}
In this section, we describe how we collect data from the emulation and how we use the data to estimate the distributions of alerts and login attempts.
\begin{table}
  \centering
\begin{tabular}{ll} \toprule
  {\textit{Metric}} & {\textit{Command in the Emulation}} \\ \midrule
  Login attempts & \texttt{cat /var/log/auth.log}\\
  IDS Alerts & \texttt{cat /var/snort/alert.csv}\\
  \bottomrule\\
\end{tabular}
\caption{Commands used to measure the emulation.}\label{tab:attribute_sensors}
\end{table}
\begin{figure}
  \centering
    \scalebox{0.85}{
      \includegraphics{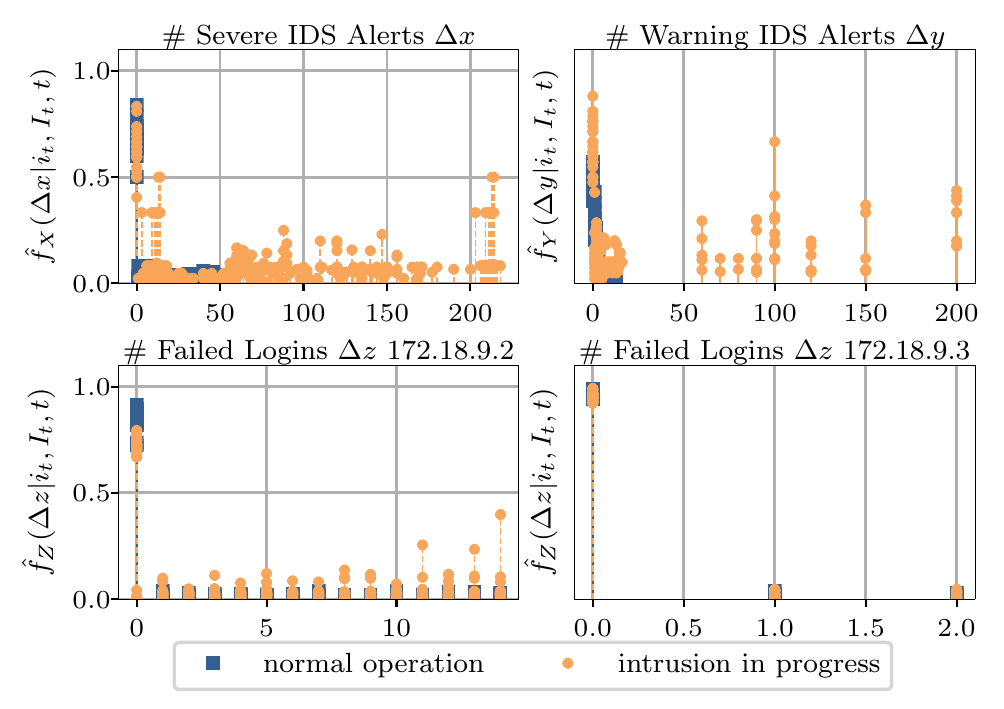}
    }
    \caption{Empirical distributions of IDS alerts (top row) and login attempts on two servers (bottom row); the graphs include several distributions that are superimposed.}
    \label{fig:ids_infra_one_macine_22}
  \end{figure}
\begin{figure*}
  \centering
\scalebox{0.98}{
      \includegraphics{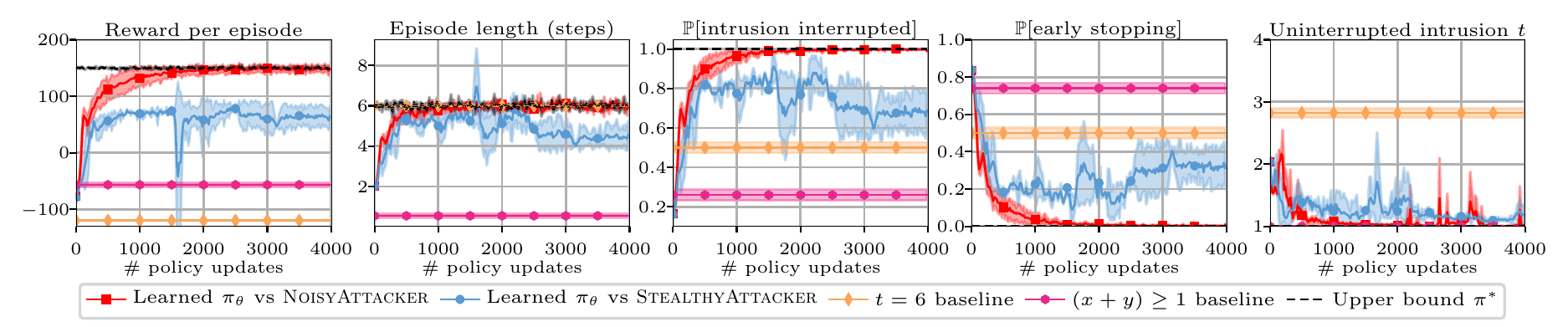}
    }
    \caption{Learning curves; the graphs show from left to right: episodic reward, length of an episode, empirical detection probability, empirical early stopping probability, and the number of steps between the start of an intrusion and the stop action; the curves show the averages and the standard deviations of three training runs with different random seeds.}
    \label{fig:training_results_1}
  \end{figure*}
\subsubsection{Measuring the Number of IDS alerts and Login Attempts in the Emulation}
At the end of every time-step, the emulation system collects the metrics $\Delta x$, $\Delta y$, $\Delta z$, which contain the alerts and login attempts that occurred during the time-step. The metrics are collected by parsing the output of the commands in Table \ref{tab:attribute_sensors}. For the evaluation reported in this paper, we collected measurements from $11000$ time-steps.
\subsubsection{Estimating the Distributions of Alerts and Login Attempts of the Target Infrastructure}
Using the collected measurements, we compute the empirical distribution $\hat{f}_{XYZ}$, which is our estimate of the corresponding distribution $f_{XYZ}$ in the target infrastructure. For each $(I_t, t)$ pair, we obtain one empirical distribution.

Fig. \ref{fig:ids_infra_one_macine_22} shows some of these distributions, which are superimposed. Although the distribution patterns generated during an intrusion and during normal operation overlap, there is a clear difference.
\subsection{Simulating Episodes of the POMDP}
During a simulation of the POMDP, the system state evolves according the dynamics described in Section \ref{sec:formal_model_2} and the observations evolve according to the estimated distribution $\hat{f}_{XYZ}$. In the initial state, no intrusion occurs. In every episode, either the defender stops before the intrusion starts or exactly one intrusion occurs, the start of which is determined by a Bernoulli process (see Section \ref{sec:formal_model_2}).

A simulated episode evolves as follows. During each time-step, if an intrusion is ongoing, the attacker executes an action in the predefined sequence listed in Table \ref{tab:static_attackers}. Subsequently, the defender samples an action from the defender policy $\pi_{\theta}$. If the action is stop, the episode ends. Otherwise, the simulation samples the number of alerts and login attempts occurring during this time-step from the empirical distribution $\hat{f}_{XYZ}$. It then computes the reward of the defender using the reward function defined in Section \ref{sec:formal_model_2}. The activities of the clients are not explicitly simulated but are implicitly represented in $\hat{f}_{XYZ}$. The sequence of time-steps continues until the defender stops or an intrusion completes, after which the episode ends.
\section{Learning Intrusion Prevention Policies using Simulation}\label{sec:eval}
To evaluate our reinforcement learning approach for finding defender policies, we simulate episodes of the POMDP where the defender policy is updated and evaluated. We evaluate the approach with respect to the convergence of policies and compare the learned policies to two baselines and to an ideal policy which presumes knowledge of the exact time of intrusion.

The evaluation is conducted using a Tesla P100 GPU and the hyperparameters for the learning algorithm are listed in Appendix \ref{appendix:hyperparameters}. Our implementation as well as the measurements for the results reported in this paper are publicly available \cite{github_cnsm_21_hammar_stadler}.
\subsection{Evaluation Setup}
We train two defender policies against \textsc{NoisyAttacker} and \textsc{StealthyAttacker} until convergence, which occurs after some $400$ iterations. In each iteration, we simulate $4000$ time-steps and perform $10$ updates to the policy. After each iteration, we evaluate the defender policy by simulating $200$ evaluation episodes and compute various performance metrics.

We compare the learned policies with two baselines. The first baseline is a policy that always stops at $t=6$, which corresponds to the immediate time-step after the expected time of intrusion $\mathbb{E}[I_t]=5$ (see Section \ref{sec:formal_model_2}). The policy of the second baseline always stops after the first IDS alert occurs, i.e. $(x+y)\geq 1$.

To evaluate the stability of the learning curves' convergence, we run each training process three times with different random seeds. One training run requires approximately six hours of processing time on a P100 GPU.

\subsection{Analyzing the Results}
The red curves in Fig. \ref{fig:training_results_1} show the performance of the learned policy against \textsc{NoisyAttacker}, and the blue curves show the policy performance against \textsc{StealthyAttacker}. The purple and the orange curves give the performance of the two baseline policies. The dashed black curves give an upper bound on the performance of the optimal policy $\pi^{*}$, which is computed assuming that the defender knows the exact time of intrusion.

The five graphs in Fig. \ref{fig:training_results_1} show that the learned policies converge, and that they are close to optimal in terms of achieving maximum reward, detecting intrusion, avoid stopping when there is no intrusion, and stopping right after an intrusion starts. Further, the learned policies outperform both baselines by a large margin (leftmost graph in Fig. \ref{fig:training_results_1}).

The performance of the learned policy against \textsc{NoisyAttacker} is better than that against \textsc{StealthyAttacker} (leftmost graph in Fig. \ref{fig:training_results_1}). This indicates that \textsc{NoisyAttacker} is easier to detect for the defender. For instance, the learned policy against \textsc{StealthyAttacker} has a higher probability of stopping early (second rightmost graph of Fig. \ref{fig:training_results_1}). This can also be seen in the second leftmost graph of Fig. \ref{fig:training_results_1}, which shows that, on average, the learned policy against \textsc{StealthyAttacker} stops after $5$ time-steps and the learned policy against \textsc{NoisyAttacker} stops after $6$ time-steps.

Looking at the baseline policies, we can see that the baseline $t=6$ stops too early in $50$ percent of the episodes, and the $(x+y)\geq 1$ baseline stops too early in $80$ percent of the episodes (second rightmost graph in Fig. \ref{fig:training_results_1}).

\section{Threshold Properties of the Learned Policies and Comparison with the Optimal Policy}\label{sec:prop_policies}
\begin{figure}
  \centering
  \scalebox{0.46}{
    \input{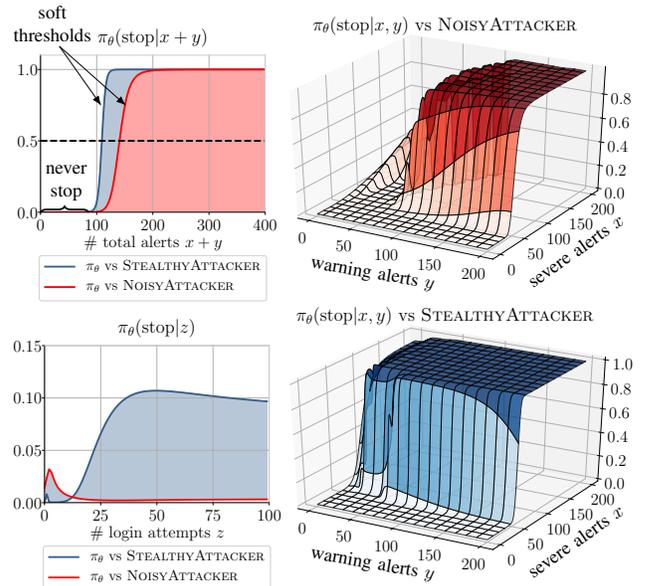}
    }
    \caption{Probability of the stop action by the learned policies in function of the number of alerts $x,y$ and login attempts $z$.}
    \label{fig:policy_inspection_def}
  \end{figure}

When analyzing the learned policies, we find that they can be expressed through thresholds, just like the optimal policy (Section \ref{sec:dp_opt}). However, in contrast to the optimal policy, the learned thresholds are based on the observed counters of alerts and login attempts rather than the belief state (which is unknown to the defender). Specifically, the top left graph in Fig. \ref{fig:policy_inspection_def} shows that the learned policies against both attackers implement a soft threshold by stopping with high probability if the number of alerts ($x_t + y_t$) exceeds $130$. This indicates that if the total number of alerts is above $130$, an intrusion has started, i.e. $x_t + y_t$ is used to approximate the posterior $b_t(1)$.

Moreover, the graphs in Fig. \ref{fig:policy_inspection_def} also show the relative importance of severe alerts $x_t$, warning alerts $y_t$, and login attempts $z_t$ for policy decisions. Specifically, it can be seen that $x_t$ has the highest importance, $y_t$ has a lower importance, and $z_t$ has the least importance for policy decisions.

We also see that the learned policy against \textsc{NoisyAttacker} is associated with a higher alert threshold than that of \textsc{StealthyAttacker} (top left graph in Fig. \ref{fig:policy_inspection_def}). This is consistent with our comment in Section \ref{sec:emu_attack} that \textsc{StealthyAttacker} is harder to detect.

Lastly, Fig. \ref{fig:opt_learned_policies.pdf} suggest that the thresholds of the learned policies are indeed close to the threshold of the optimal policy. For instance, the policy learned against \textsc{NoisyAttacker} stops immediately after the optimal stopping time.
\begin{figure}
  \centering
  \scalebox{0.59}{
    \input{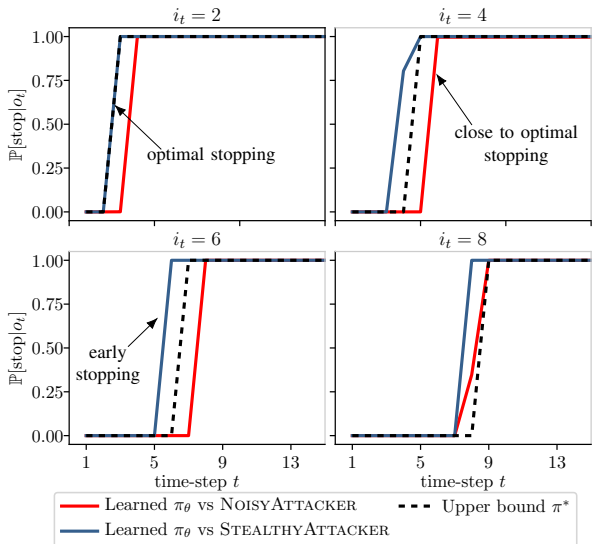}
    }
    \caption{Comparison of the learned policies $\pi_{\theta}$ and an upper bound on the optimal policy $\pi^{*}$ for $4$ sample episodes where the intrusions start at $i_t=2,4,6,8$.}
    \label{fig:opt_learned_policies.pdf}
  \end{figure}
\section{Related Work}\label{sec:related_work}
The problem of automatically finding security policies has been studied using concepts and methods from different fields, most notably reinforcement learning, game theory, dynamic programming, control theory, attack graphs, statistical tests, and evolutionary computation. For a literature review of deep reinforcement learning in network security see \cite{deep_rl_cyber_sec}, and for an overview of game-theoretic approaches see \cite{nework_security_alpcan}. For examples of research using dynamic programming, control theory, attack graphs, statistical tests, and evolutionary methods, see \cite{dp_security_1}, \cite{malware_oc}, \cite{miehling_attack_graph}, \cite{tartakovsky_1}, and \cite{armsrace_malware}.

Most research on reinforcement learning applied to network security is recent. Prior work that most resembles the approach taken in this paper includes our previous research \cite{hammar_stadler} and the work in \cite{elderman}, \cite{schwartz_2020}, \cite{oslo_uni}, \cite{kurt_rl}, \cite{microsoft_red_teaming}, and \cite{ridley_ml_defense}. All of these papers focus on network intrusions using reinforcement learning.

This paper differs from prior work in the following ways: (1) we formulate intrusion prevention as an optimal stopping problem (\cite{kurt_rl} uses a similar approach); (2) we use an emulated infrastructure to estimate the parameters of our simulation model, rather than relying on abstract assumptions like \cite{hammar_stadler,elderman,oslo_uni,kurt_rl,microsoft_red_teaming,ridley_ml_defense}; (3) we derive a structural property of the optimal policy; (4) we analyze the learned policies and relate them to the optimal policy, an analysis which prior work lacks \cite{hammar_stadler,elderman,oslo_uni,microsoft_red_teaming,ridley_ml_defense}; and (5) we apply state-of-the-art reinforcement learning algorithms, i.e. PPO \cite{ppo}, rather than traditional ones as used in \cite{elderman,oslo_uni,kurt_rl,schwartz_2020,microsoft_red_teaming,ridley_ml_defense}.

Optimal stopping has previously been applied to model problems in in different domains, including finance \cite{optimal_stopping_finance}, queuing systems \cite{roy_threshold}, and attack detection \cite{tartakovsky_1}. We believe we are first in using optimal stopping to model intrusion prevention.
\section{Conclusion and Future Work}\label{sec:conclusions}
In this paper, we proposed a novel formulation of the intrusion prevention problem as one of optimal stopping. This allowed us to state that the optimal defender policy can be expressed using a threshold obtained from infrastructure measurements. Further, we used reinforcement learning to estimate the optimal defender policies in a simulation environment. In addition to validating the predictions from the theory, we learned from the simulations a) the relative importance of measurement metrics with respect to the threshold level and b) that different attacker profiles can lead to different thresholds of the defender policies.

We plan to extend this work in three directions. First, the model of the defender in this paper is simplistic as it allows only for a single stop action. We plan to increase the set of actions that the defender can take to better reflect today's defense capabilities, while still keeping the structure of the stopping formulation. Second, we plan to extend the observation capabilities of the defender to obtain more realistic policies. Third, in the current paper, the attacker policy is static. We plan to extend the model to include a dynamic attacker that can learn just like the defender. This requires a game-theoretic formulation of the problem.

\section{Acknowledgments}
This research has been supported in part by the Swedish armed forces and was conducted at KTH Center for Cyber Defense and Information Security (CDIS). The authors would like to thank Pontus Johnson for useful input to this research, and Forough Shahab Samani and Xiaoxuan Wang for their constructive comments to an earlier draft of this paper.

\appendix
\subsection{Proof of Theorem \ref{thm:structural_result}}\label{appendix:structural_result_proof}
We will work our way to the proof of Theorem \ref{thm:structural_result} by establishing some initial results.
\begin{lemma}\label{lemma:belief_condition}
It is optimal to stop in belief state $b(1)$ iff:
\footnotesize\begin{align}
               &b(1) \geq \\
               &\frac{110 + \displaystyle\sum_{o\in \mathcal{O}} V^{*}\big(b_o^{C}(1)\big)\Big(p\mathcal{Z}(o,1,C) + (1-p)\mathcal{Z}(o,0,C)\Big)}{300 + \displaystyle\sum_{o\in \mathcal{O}}V^{*}\big(b_o^{C}(1)\big)\Big(p\mathcal{Z}(o,1,C) + (1-p)\mathcal{Z}(o,0,C)- \mathcal{Z}(o,1,C)\Big)}\nonumber
\end{align}\normalsize
\end{lemma}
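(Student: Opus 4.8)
The plan is to read off the optimality condition directly from the Bellman equation \eqref{eq:bellman_belief}: with $\gamma=1$ and the two actions $S,C$, stopping is optimal in belief state $b(1)$ exactly when the stopping return dominates the continuation return, i.e.
\begin{align}
r\big(b(1),S\big)\ \geq\ r\big(b(1),C\big)+\sum_{o\in\mathcal{O}}\mathbb{P}\big[o\,\big|\,b(1),C\big]\,V^{*}\big(b_o^{C}(1)\big).\nonumber
\end{align}
Everything then reduces to evaluating the three ingredients and rearranging.

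First I would compute the expected immediate rewards by averaging the state-dependent reward function \eqref{eq:reward_2}--\eqref{eq:reward_3} against the belief, using $b(0)=1-b(1)$ and the parameter values $R_{es}=R_{int}=-100$, $R_{st}=100$, $R_{sla}=10$. This gives $r(b(1),S)=200\,b(1)-100$ and $r(b(1),C)=10-100\,b(1)$; note that the stopping side carries no future term because the process enters $\emptyset$ with $r(\emptyset,\cdot)=0$. Next I would compute the one-step observation likelihood $\mathbb{P}[o\mid b(1),C]$ by first pushing the belief through the transition kernel \eqref{eq:tp_2}--\eqref{eq:tp_4}, obtaining $\mathbb{P}[s'=1\mid b(1),C]=b(1)+(1-b(1))p$ and $\mathbb{P}[s'=0\mid b(1),C]=(1-b(1))(1-p)$, and then weighting by $\mathcal{Z}(o,\cdot,C)$ to get
\begin{align}
\mathbb{P}\big[o\,\big|\,b(1),C\big]=b(1)\mathcal{Z}(o,1,C)+(1-b(1))\big(p\mathcal{Z}(o,1,C)+(1-p)\mathcal{Z}(o,0,C)\big).\nonumber
\end{align}

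Substituting into the inequality, I would collect the terms proportional to $b(1)$ on the left and the constant terms on the right and factor out $b(1)$; this produces
\begin{align}
b(1)\!\left[300+\!\sum_{o\in\mathcal{O}}\! V^{*}\big(b_o^{C}(1)\big)\big(p\mathcal{Z}(o,1,C)+(1-p)\mathcal{Z}(o,0,C)-\mathcal{Z}(o,1,C)\big)\right]\ \geq\ 110+\!\sum_{o\in\mathcal{O}}\! V^{*}\big(b_o^{C}(1)\big)\big(p\mathcal{Z}(o,1,C)+(1-p)\mathcal{Z}(o,0,C)\big),\nonumber
\end{align}
and dividing by the bracketed coefficient of $b(1)$ yields the claimed threshold expression. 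The one step that is not mere bookkeeping is justifying that this division preserves the inequality, i.e. that the coefficient of $b(1)$ is strictly positive; I expect this to be the main obstacle. I would handle it by first establishing a uniform bound $0\le V^{*}(\cdot)\le M$ on the belief simplex (the per-step reward is bounded and $\mathbb{E}_{\pi}[T_\emptyset]<\infty$, so the total return is bounded), then rewriting the coefficient's correction term as $(1-p)\sum_o V^{*}(b_o^C(1))\big(\mathcal{Z}(o,0,C)-\mathcal{Z}(o,1,C)\big)$ and bounding it in absolute value by $2(1-p)M$ via $\sum_o|\mathcal{Z}(o,0,C)-\mathcal{Z}(o,1,C)|\le 2$; with $p=0.2$ this makes the coefficient at least $300-2(1-p)M>0$ provided the reward-based bound $M$ is small enough, which it is for the given parameters. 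With the sign settled, the chain of equivalences is exact, giving the ``iff''.
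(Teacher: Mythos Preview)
Your derivation mirrors the paper's: start from the Bellman equation, compute $r(b(1),S)=200\,b(1)-100$ and $r(b(1),C)=10-100\,b(1)$, expand $\mathbb{P}[o\mid b(1),C]$ through the transition kernel and observation function, and rearrange to isolate $b(1)$. The paper in fact stops there, simply writing ``by rearranging terms'' and dividing by the bracketed coefficient without justifying its sign; you go further by flagging this as the one substantive step for the ``iff'' and sketching a bound via a uniform bound on $V^*$ together with the total-variation distance of the two observation laws---a point the paper's own proof leaves implicit. One minor correction: your claimed lower bound $V^*\ge 0$ is not obvious from the reasoning you give, but $V^*(b)\ge r(b,S)=200\,b(1)-100\ge -100$ (immediate from the stop action always being available) is enough for your absolute-value estimate, and combined with the omniscient upper bound your argument goes through for the stated parameters.
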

\begin{proof}
Considering both actions of the defender ($\mathcal{A} = \{S,C\}$), we derive from the Bellman equation (Eq. \ref{eq:bellman_belief}):\footnotesize
\begin{align}
&\pi^{*}\big(b(1)\big) \\
  &= \argmax_{a \in \mathcal{A}} \Bigg[r\big(b(1),a\big) + \sum_{o\in \mathcal{O}}\mathbb{P}[o| b(1), a]V^{*}\big(b_o^{a}(1)\big)\Bigg]\nonumber \\
  &= \argmax \Bigg[\underbrace{r\big(b(1),S\big)}_{a=S}, \underbrace{r\big(b(1),C\big) + \sum_{o\in \mathcal{O}}\mathbb{P}[o| b(1), C]V^{*}\big(b_o^{C}(1)\big)}_{a=C}\Bigg]\nonumber\\
&=\argmax \Bigg[\underbrace{b(1)200 - 100}_{\text{\normalsize $\omega$}}, \underbrace{10-b(1)100 + \sum_{o\in \mathcal{O}}\mathbb{P}[o| b(1), C]V^{*}\big(b_o^{C}(1)\big)}_{\text{\normalsize $\epsilon$}}\Bigg]\nonumber
\end{align}\normalsize
In the above equation, $\omega$ is the expected reward for stopping and $\epsilon$ is the expected cumulative reward for continuing. If $\epsilon = \omega$, both actions of the defender, continuing and stopping, maximize the expected cumulative reward. If $\omega \geq \epsilon$, it is optimal for the defender to stop.

Next, we use $\mathbb{P}[o| b(1), a] = \sum_{s \in \mathcal{S}} \sum_{s_{\prime}\in \mathcal{S}} b(s)\mathcal{P}^a_{ss^{\prime}}\mathcal{Z}(o,s^{\prime}, a)$ and $\mathcal{S} = \{0,1\}$ to obtain:
\footnotesize
\begin{align}
&\pi^{*}\big(b(1)\big) \\
  &=\argmax \Bigg[b(1)200 - 100, 10-b(1)100 + \sum_{o\in \mathcal{O}}\mathbb{P}[o| b(1), C]V^{*}\big(b_o^{C}(1)\big)\Bigg]\nonumber\\
&=\argmax \Bigg[b(1)200 - 100, 10-b(1)100 + \sum_{o\in \mathcal{O}}V^{*}\big(b_o^{C}(1)\big) \nonumber\\
  &\bigg(b(1)\mathcal{Z}(o,1,C) + (1-b(1)\big)\Big(p\mathcal{Z}(o,1,C) + (1-p)\mathcal{Z}(o,0,C)\Big)\bigg)\Bigg]\nonumber\\
&=\argmax \Bigg[b(1)200 - 100, 10 + b(1)\bigg(-100 + \sum_{o\in \mathcal{O}}V^{*}\big(b_o^{C}(1)\big)\Big(\mathcal{Z}(o,1,C)\nonumber \\
  &-\big(p\mathcal{Z}(o,1,C) + (1-p)\mathcal{Z}(o,0,C)\big)\Big)\bigg) \nonumber\\
&+ \sum_{o\in \mathcal{O}} V^{*}\big(b_o^{C}(1)\big)\bigg(p\mathcal{Z}(o,1,C) + (1-p)\mathcal{Z}(o,0,C)\bigg) \Bigg] \nonumber
\end{align}\normalsize
This implies that it is optimal to stop in belief state $b(1)$ iff:
\footnotesize\begin{align}
&b(1)200 - 100 \geq 10 + b(1)\bigg(-100 + \sum_{o\in \mathcal{O}}V^{*}\big(b_o^{C}(1)\big)\Big(\mathcal{Z}(o,1,C) \nonumber\\
  &-\big(p\mathcal{Z}(o,1,C) + (1-p)\mathcal{Z}(o,0,C)\big)\Big)\bigg) \nonumber\\
&+ \sum_{o\in \mathcal{O}} V^{*}\big(b_o^{C}(1)\big)\bigg(p\mathcal{Z}(o,1,C) + (1-p)\mathcal{Z}(o,0,C)\bigg)
\end{align}\normalsize
By rearranging terms, we get:
\footnotesize\begin{align}
               &b(1) \geq \\
               &\underbrace{\frac{110 + \displaystyle\sum_{o\in \mathcal{O}} V^{*}\big(b_o^{C}(1)\big)\Big(p\mathcal{Z}(o,1,C) + (1-p)\mathcal{Z}(o,0,C)\Big)}{300 + \displaystyle\sum_{o\in \mathcal{O}}V^{*}\big(b_o^{C}(1)\big)\Big(p\mathcal{Z}(o,1,C) + (1-p)\mathcal{Z}(o,0,C)- \mathcal{Z}(o,1,C)\Big)}}_{\text{\normalsize $\alpha_{b(1)}$}}\nonumber
\end{align}\normalsize
\end{proof}
Lemma \ref{lemma:belief_condition} shows that the optimal policy is determined by the scalar thresholds $\alpha_{b(1)}$. Specifically, it is optimal to stop in belief state $b(1)$ if $b(1) \geq \alpha_{b(1)}$. We conclude that the \textit{stopping set} $\mathscr{S}$\textemdash the set of belief states $b(1) \in [0,1]$ where it is optimal to stop\textemdash is:
\begin{align}
\mathscr{S} = \left\{b(1) \in [0,1] : b(1) \geq \alpha_{b(1)} \right\}
\end{align}
Similarly, the \textit{continuation set} $\mathscr{C}$\textemdash the set of belief states where it is optimal to continue\textemdash is $\mathscr{C} = [0,1] \setminus \mathscr{S}$.

Building on the above analysis, the main idea behind the proof of Theorem \ref{thm:structural_result} is to show that the stopping set $\mathscr{S}$ has the form $\mathscr{S} = [\alpha^{*}, 1]$, where $0 \leq \alpha^{*} \leq 1$ is the stopping threshold. Towards this goal, we state the following two lemmas.
\begin{lemma}\label{lemma:convex_value_fun}
The following lemma is due to Sondik \cite{smallwood_1}.

The optimal value function:
\begin{align}
V^{*}(b_t) = \max_{a_t\in \mathcal{A}} \mathbb{E}\left[r_{t+1} + V^{*}(b_{t+1}) | b_t, a_t\right]
\end{align}
is piecewise linear and convex with respect to $b \in \mathcal{B}$.
\end{lemma}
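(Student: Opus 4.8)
The plan is to follow Sondik's classical argument and establish the piecewise-linear-and-convex (PWLC) property by induction on the number of stages-to-go, using the \emph{$\alpha$-vector} representation. Concretely, I would introduce the $n$-stage value functions generated by value iteration, $V_0^*\equiv 0$ and $V_{n+1}^*(b) = \max_{a\in\mathcal{A}}\big[ r(b,a) + \sum_{o\in\mathcal{O}} \mathbb{P}[o\mid b,a]\,V_n^*(b_o^a)\big]$, and prove that each $V_n^*$ admits a representation $V_n^*(b)=\max_{\alpha\in\Gamma_n}\langle \alpha,b\rangle$ for a \emph{finite} set $\Gamma_n\subset\mathbb{R}^{|\mathcal{S}|}$. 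Since a finite pointwise maximum of linear functions is convex and piecewise linear, this representation immediately yields the claim at each finite horizon.

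The base case is trivial: $V_0^*\equiv 0 = \max_{\alpha\in\{\mathbf{0}\}}\langle\alpha,b\rangle$. For the inductive step, assume $V_n^*(b)=\max_{\alpha\in\Gamma_n}\langle\alpha,b\rangle$. The immediate-reward term $r(b,a)=\sum_{s}b(s)r(s,a)$ is linear in $b$. The crucial manipulation is on the look-ahead term: by the Bayes update (Eq.~\ref{eq:belief_upd}) the normalizing constant cancels, $\mathbb{P}[o\mid b,a]\,b_o^a(s') = \mathcal{Z}(o,s',a)\sum_{s}\mathcal{P}^a_{ss'}b(s)$, so that
$$\mathbb{P}[o\mid b,a]\,V_n^*(b_o^a) = \max_{\alpha\in\Gamma_n}\ \sum_{s'}\alpha(s')\,\mathcal{Z}(o,s',a)\sum_{s}\mathcal{P}^a_{ss'}\,b(s),$$
a finite maximum of functions linear in $b$. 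Summing over the finite set $\mathcal{O}$ converts this into a maximum over the product index set $\prod_{o\in\mathcal{O}}\Gamma_n$ of sums of linear functions — again a finite maximum of linear functions — and adding $r(b,a)$ preserves the form; taking the outer $\max$ over the finite action set $\mathcal{A}$ is merely a union of the resulting finite vector sets. Hence $V_{n+1}^*(b)=\max_{\alpha\in\Gamma_{n+1}}\langle\alpha,b\rangle$, closing the induction.

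To reach the value function $V^*$ of our POMDP, I would pass to the limit $V^*=\lim_{n\to\infty}V_n^*$, which is legitimate here because the terminal state $\emptyset$ is absorbing with zero reward and, for any policy guaranteed to stop, $\mathbb{E}[T_\emptyset]<\infty$, so the tail contributions vanish; convexity is preserved under this (pointwise, and on the compact simplex $\mathcal{B}$ uniform) limit, giving convexity of $V^*$, while the finite piecewise-linear structure is exactly the stabilization-in-finitely-many-stages phenomenon cited from \cite{smallwood_1}. I expect the two main obstacles to be: (i) carrying out the cancellation of the normalization $C$ in the belief update cleanly, so the per-observation term is genuinely linear \emph{before} the maximization — this is what makes the entire induction go through; and (ii) justifying the transfer from the finite-horizon $V_n^*$ to $V^*$ while retaining \emph{finitely many} linear pieces, which relies on the effective horizon of the stopping problem being finite rather than merely having finite expectation.
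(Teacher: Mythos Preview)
Your proposal is correct and is precisely the Sondik $\alpha$-vector induction that the paper invokes; the paper itself does not reproduce the argument but simply refers the reader to \cite{smallwood_1} and \cite[Theorem~7.4.1, p.~155]{krishnamurthy_2016}. The subtlety you flag in~(ii)---that the \emph{finite} piecewise-linear structure need not survive the limit when the horizon is only finite in expectation---is real, but note that the only property the paper actually uses downstream (Lemma~\ref{lemma:convex_stopping_set}) is convexity of $V^{*}$, which your pointwise-limit argument delivers without difficulty.
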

\begin{proof}
See \cite{smallwood_1} or \cite[pp. 155, Theorem 7.4.1]{krishnamurthy_2016}.
\end{proof}
\begin{lemma}\label{lemma:convex_stopping_set}
The stopping set $\mathscr{S}$ is a convex subset of the belief space $\mathcal{B}$.
\end{lemma}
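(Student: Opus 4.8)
The plan is to realize $\mathscr{S}$ as a superlevel set of a concave function. Writing the right-hand side of the Bellman equation (Eq.~\ref{eq:bellman_belief}) over the two admissible actions, a belief $b(1)$ lies in $\mathscr{S}$ precisely when $r\big(b(1),S\big) \geq W\big(b(1)\big)$, where $r\big(b(1),S\big) = 200\,b(1) - 100$ is the affine expected stopping reward and
\[
W\big(b(1)\big) \;=\; r\big(b(1),C\big) \;+\; \sum_{o\in\mathcal{O}} \mathbb{P}[o\mid b(1),C]\, V^{*}\big(b_o^{C}(1)\big)
\]
is the expected cumulative reward of continuing. Since $\mathscr{S} = \{\, b(1)\in[0,1] : r(b(1),S) - W(b(1)) \geq 0 \,\}$, it suffices to prove that $b(1)\mapsto r(b(1),S)-W(b(1))$ is concave on $[0,1]$, and since $r(\cdot,S)$ is affine this reduces to showing that $W$ is convex on $[0,1]$.

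Next I would establish convexity of $W$. The continuation reward $r\big(b(1),C\big) = 10 - 100\,b(1)$ is affine, so only the observation sum matters. The key point is that the \emph{unnormalized} belief update is linear in $b$: from the Bayes filter (Eq.~\ref{eq:belief_upd}), $\mathbb{P}[o\mid b,C]\, b_o^{C}(s^{\prime}) = \sum_{s\in\mathcal{S}} b(s)\,\mathcal{P}^{C}_{ss^{\prime}}\,\mathcal{Z}(o,s^{\prime},C)$ is linear in $b$; write $\phi_o(b)$ for the resulting vector, whose $\ell_1$-norm equals $\mathbb{P}[o\mid b,C]$. By Lemma~\ref{lemma:convex_value_fun}, $V^{*}$ is piecewise linear and convex on $\mathcal{B}$, hence $V^{*}(\beta)=\max_{k}\langle\gamma_k,\beta\rangle$ for a finite family $\{\gamma_k\}$. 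Therefore
\[
\mathbb{P}[o\mid b,C]\, V^{*}\big(b_o^{C}\big) \;=\; \mathbb{P}[o\mid b,C]\,\max_{k}\Big\langle \gamma_k,\, \phi_o(b)\big/\mathbb{P}[o\mid b,C]\Big\rangle \;=\; \max_{k}\big\langle \gamma_k,\, \phi_o(b)\big\rangle ,
\]
a maximum of linear functions of $b$ (with the degenerate case $\mathbb{P}[o\mid b,C]=0$, where $\phi_o(b)=0$, handled automatically since the expression equals $0$). Summing over the finitely many $o\in\mathcal{O}$ preserves convexity, and adding the affine term $r(\cdot,C)$ preserves it as well, so $W$ is convex on the belief segment parameterized by $b(1)\in[0,1]$. (Equivalently, one may argue that $b\mapsto \mathbb{P}[o\mid b,C]\,V^{*}(b_o^{C})$ is the perspective of the convex $V^{*}$ precomposed with the linear map $b\mapsto(\phi_o(b),\mathbb{P}[o\mid b,C])$, hence convex; I would use whichever formulation fits the surrounding text.)

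Combining the two steps, $r(\cdot,S)-W$ is the difference of an affine function and a convex function, hence concave on $[0,1]$, so its superlevel set $\mathscr{S}$ is convex, which is the claim; consequently $\mathscr{C}=[0,1]\setminus\mathscr{S}$ is a (relatively open) interval or empty, setting up the next step toward Theorem~\ref{thm:structural_result}. I expect the main obstacle to be the middle paragraph: one must carefully justify that multiplying $V^{*}(b_o^{C})$ by the scalar $\mathbb{P}[o\mid b,C]$ cancels the Bayes normalization to yield a genuinely convex (indeed piecewise linear) function of $b$, which rests on the linearity of the unnormalized update and on the piecewise-linear-convex representation supplied by Lemma~\ref{lemma:convex_value_fun}; everything else is routine bookkeeping with the specific rewards of this POMDP.
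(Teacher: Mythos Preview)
Your argument is correct, but it takes a different and somewhat longer route than the paper. The paper proves convexity of $\mathscr{S}$ directly from Lemma~\ref{lemma:convex_value_fun} and the linearity of the \emph{stopping} $Q$-value alone: for $b_1,b_2\in\mathscr{S}$ one has $V^{*}(b_i)=Q^{*}(b_i,S)$, so by convexity of $V^{*}$ and affinity of $Q^{*}(\cdot,S)$,
\[
V^{*}\big(\lambda b_1+(1-\lambda)b_2\big)\leq \lambda Q^{*}(b_1,S)+(1-\lambda)Q^{*}(b_2,S)=Q^{*}\big(\lambda b_1+(1-\lambda)b_2,S\big)\leq V^{*}\big(\lambda b_1+(1-\lambda)b_2\big),
\]
forcing equality and hence $\lambda b_1+(1-\lambda)b_2\in\mathscr{S}$. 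In contrast, you establish convexity of the \emph{continuation} value $W(\cdot)=Q^{*}(\cdot,C)$ by unpacking the Bayes update and invoking the piecewise-linear representation of $V^{*}$, then realize $\mathscr{S}$ as a superlevel set of the concave function $r(\cdot,S)-W$. Your approach proves more than is needed (convexity of $Q^{*}(\cdot,C)$, which is essentially a step of Sondik's induction) and leans on the full PWLC structure rather than just convexity of $V^{*}$; the paper's two-point argument is shorter and uses only the bare conclusion of Lemma~\ref{lemma:convex_value_fun}. Either route is perfectly valid here.
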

\begin{proof}
A general proof is given in \cite{krishnamurthy_2016}. We restate it here to show that it holds in our case.

To show that the stopping set $\mathscr{S}$ is convex, we need to show that for any two belief states $b_1,b_2 \in \mathscr{S}$, any linear combination of $b_1,b_2$ is also in $\mathscr{S}$. That is, $b_1,b_2\in \mathscr{S} \implies \lambda b_1 + (1-\lambda)b_2 \in \mathscr{S}$ for any $\lambda \in [0,1]$.

Since $V^{*}(b)$ is convex (Lemma \ref{lemma:convex_value_fun}), we have by definition of convex sets that:
\begin{align}
V^{*}(\lambda b_1 + (1-\lambda)b_2) &\leq \lambda V^{*}(b_1) + (1-\lambda)V(b_2)
\end{align}
Further, as $b_1,b_2 \in \mathscr{S}$ by assumption, the optimal action in $b_1$ and $b_2$ is the stop action $S$. Thus, we have that $V^{*}(b_1)=Q^{*}(b_1,S)=100b_1(1)-100b_1(0)$ and $V^{*}(b_2)=Q^{*}(b_2,S)=100b_2(1) - 100b_2(0)$. Hence:
\footnotesize\begin{align}
  &V^{*}\big(\lambda b_1(1) + (1-\lambda)b_2(1)\big) \\
  &\leq \lambda V^{*}\big(b_1(1)) + (1-\lambda)V^{*}(b_2(1)\big)\\
&= \lambda Q^{*}(b_1, S) + (1-\lambda)Q^{*}(b_2, S)\\
&= \lambda\big(100b_1(1) - b_1(0)100\big) + (1-\lambda)\big(100b_2(1) - b_2(0)100\big)\\
&= Q^{*}\big(\lambda b_1 + (1-\lambda)b_2, S\big)\\
&\leq V^{*}\big(\lambda b_1(1) + (1-\lambda)b_2(1)\big)
\end{align}\normalsize
where the last inequality is because $V^{*}$ is optimal. Thus we have that $Q^{*}\big(\lambda b_1 + (1-\lambda)b_2, S\big) = V^{*}\big(\lambda b_1(1) + (1-\lambda)b_2(1)\big)$. This means that if $b_1,b_2 \in \mathscr{S}$, then $\lambda b_1 + (1-\lambda)b_2 \in \mathscr{S}$ for any $\lambda \in [0,1]$. Hence, $\mathscr{S}$ is convex.
\end{proof}
Now we use Lemma \ref{lemma:convex_stopping_set} to prove Theorem \ref{thm:structural_result}.
\begin{proof}[Proof of Theorem \ref{thm:structural_result}]
The proof of Theorem \ref{thm:structural_result} follows the same argument as the proof in \cite[Corollary 12.2.2, pp. 258]{krishnamurthy_2016}.

The belief space $\mathcal{B} = [0,1]$ is defined by $b(1) \in [0,1]$. In consequence, using Lemma \ref{lemma:convex_stopping_set}, we have that the stopping set $\mathscr{S}$ is a convex subset of $[0,1]$. That is, $\mathscr{S}$ has the form $[\alpha^{*}, \beta^{*}]$ where $0 \leq \alpha^{*} \leq \beta^{*} \leq 1$. Thus, to show that the optimal policy is of the form:
\begin{align}
\pi^{*}\big(b(1)\big)=
\begin{dcases}
  S & \quad \text{if $b(1) \geq \alpha^{*}$} \\
  C & \quad \text{otherwise}
\end{dcases}
\end{align}
it suffices to show that $\beta^{*} = 1$, i.e. $\mathscr{S} = [\alpha^{*}, \beta^{*}] = [\alpha^{*}, 1]$.

If $b(1) = 1$, then the Bellman equation states that:
\footnotesize\begin{align}
\pi^{*}(1)&= \argmax_{a \in \mathcal{A}} \Bigg[r(1,a) + \sum_{o\in \mathcal{O}}\mathbb{P}[o| 1, a]V^{*}\big(b_o^{a}(1)\big)\Bigg]\\
&= \argmax \Bigg[\underbrace{100}_{a=S}, \underbrace{-90 + \sum_{o\in \mathcal{O}}\mathcal{Z}(o,1,C)V^{*}\big(b_o^{C}(1)\big)}_{a=C}\Bigg]
\end{align}\normalsize
Since $s=1$ is an absorbing state until the terminal state is reached, we have that $b_o^{C}(1)=1$ for all $o \in \mathcal{O}$. This follows from the definition of $b_o^{C}$ (Eq. \ref{eq:belief_upd}). Consequently, we get:
\small\begin{align}
  \pi^{*}(1) &= \argmax \Bigg[100, -90 + \sum_{o\in \mathcal{O}}\mathcal{Z}(o,1,C)V^{*}\big(b_o^{C}(1)\big)\Bigg]\\
             &= \argmax \Bigg[100, -90 + V^{*}(1)\Bigg]
\end{align}\normalsize
Finally, since $V^{*}(1) \leq 100$, we conclude that:
\begin{align}
  \pi^{*}(1) &= \argmax \Bigg[100, -90 + V^{*}(1)\Bigg] =S
\end{align}
This means that $\pi^{*}(1) = S$, hence $b(1)=1$ is in the stopping set, i.e. $1 \in \mathscr{S}$. As $1 \in \mathscr{S}$, and since $\mathscr{S}$ is a convex subset $[\alpha^{*}, \beta^{*}] \subseteq [0,1]$, we have that $\beta^{*}=1$. Then it follows that $\mathscr{S} = [\alpha^{*}, \beta^{*}] = [\alpha^{*}, 1]$.
\end{proof}

\paragraph*{An Example to Illustrate Theorem \ref{thm:structural_result}}
To illustrate the implications of Theorem \ref{thm:structural_result}, consider the following example.

The observation $o_t$ is the number of IDS alerts that were generated during time-step $t$, which is an integer scalar in the observation space $\mathcal{O} = \{0,\hdots,5\}$. Further, assume that the observation function $\mathcal{Z}(o^{\prime},s^{\prime},a)$ is defined using the discrete uniform distribution $\mathcal{U}(\{a,b\})$ as follows.
\begin{align}
\mathcal{Z}(o^{\prime},0,C) &= \mathcal{U}(\{0,4\}) && \text{no intrusion}\\
\mathcal{Z}(o^{\prime},1,C) &= \mathcal{U}(\{0,5\}) && \text{intrusion}\\
\mathcal{Z}(\emptyset,\emptyset,\cdot) &= 1
\end{align}
The rest of the POMDP follows the definitions in Section \ref{sec:formal_model_2}.

Due to the small observation space, the optimal policy can be computed using dynamic programming and value iteration. In particular, we apply Sondik's value iteration algorithm \cite{smallwood_1} to compute the optimal value function $V^{*}\big(b(1)\big)$ as well as the optimal thresholds $\alpha_{b(1)}$ (Fig. \ref{fig:proof_example}).
\begin{figure}
  \centering
  \scalebox{0.63}{
    \input{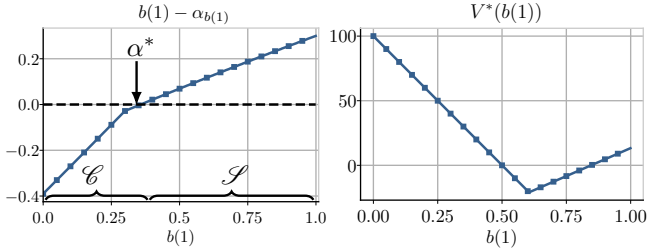}
    }
    \caption{Left: optimal stopping thresholds $b(1)-\alpha_{b(1)}$, if $b(1)-\alpha_{b(1)} \geq 0$ it is optimal to stop; right: the piecewise linear and convex optimal value function $V^{*}(b(1))$.}
    \label{fig:proof_example}
  \end{figure}

As can be seen in Fig. \ref{fig:proof_example}, $b(1)-\alpha_{b(1)}$ is increasing in $b(1)$ and there exists a unique minimum belief point $b(1) \approx 0.357$ such that $b(1) \geq \alpha_{b(1)}$, which we denote by $\alpha^{*}$. Hence the stopping set is the convex set $\mathscr{S}=[0.357,1]$, and the continuation set $\mathscr{C}$ is the set $\mathscr{C}=[0,0.357)$.
\subsection{Hyperparameters: Table \ref{tab:hyperparams}}\label{appendix:hyperparameters}
\begin{table}
\centering
\resizebox{1\columnwidth}{!}{%
\begin{tabular}{ll} \toprule
  {\textit{Parameters}} & {\textit{Values}} \\ \midrule
  $\gamma$, lr $\alpha$, batch, \# layers, \# neurons, clip $\epsilon$ & $1$, $5\cdot 10^{-4}$, $4\cdot 10^3$, $3$, $64$, $0.2$\\
  $X_{max}, Y_{max},Z_{max}$, GAE $\lambda$, ent-coef, activation & $10^3$, $10^3$, $10^3$, $0.95$, $5\cdot 10^{-4}$, ReLU \\
  \bottomrule\\
\end{tabular}
}
\caption{Hyperparameters of the learning algorithm.}\label{tab:hyperparams}
\end{table}
\subsection{Configuration of the Infrastructure in Fig. \ref{fig:system2}: Table \ref{tab:emulation_setup}}\label{appendix:infrastructure_configuration}
\begin{table}
\centering
\resizebox{0.96\columnwidth}{!}{%
\begin{tabular}{ll} \toprule
  {\textit{ID (s)}} & {\textit{OS:Services:Exploitable Vulnerabilities}} \\ \midrule
  $1$ & Ubuntu20:Snort(community ruleset v2.9.17.1),SSH:- \\
  $2$ & Ubuntu20:SSH,HTTP Erl-Pengine,DNS:SSH-pw\\
  $4$ & Ubuntu20:HTTP Flask,Telnet,SSH:Telnet-pw \\
  $10$ &Ubuntu20:FTP,MongoDB,SMTP,Tomcat,Teamspeak3,SSH:FTP-pw \\
  $12$ & Jessie:Teamspeak3,Tomcat,SSH:CVE-2010-0426,SSH-pw \\
  $17$ & Wheezy:Apache2,SNMP,SSH:CVE-2014-6271 \\
  $18$ & Deb9.2:IRC,Apache2,SSH:SQL Injection \\
  $22$ & Jessie:PROFTPD,SSH,Apache2,SNMP:CVE-2015-3306 \\
  $23$ & Jessie:Apache2,SMTP,SSH:CVE-2016-10033 \\
  $24$ & Jessie:SSH:CVE-2015-5602,SSH-pw \\
  $25$ & Jessie: Elasticsearch,Apache2,SSH,SNMP:CVE-2015-1427\\
  $27$ & Jessie:Samba,NTP,SSH:CVE-2017-7494\\
  $3$,$11$,$5$-$9$& Ubuntu20:SSH,SNMP,PostgreSQL,NTP:-\\
  $13$-$16$,$19$-$21$,$26$,$28$-$31$& Ubuntu20:NTP, IRC, SNMP, SSH, PostgreSQL:-\\
  \bottomrule\\
\end{tabular}
}
\caption{Configuration of the target infrastructure (Fig. \ref{fig:system2}).}\label{tab:emulation_setup}
\end{table}
\bibliographystyle{IEEEtran}
\bibliography{references,url}

\end{document}

